\author{
   Matthew D. Kvalheim\footnote{\textbf{Corresponding author;} EECS Department, University of Michigan, Ann Arbor, MI, USA (\texttt{kvalheim{@}umich.edu})}
\qquad
   Brian Bittner\footnote{Robotics Institute, University of Michigan, Ann Arbor, MI, USA (\texttt{babitt{@}umich.edu})}\qquad
  Shai Revzen\footnote{Department of EECS, Department of EEB, Robotics Institute, University of Michigan, Ann Arbor, MI, USA (\texttt{shrevzen{@}umich.edu})}}
\title{Gait modeling and optimization for the perturbed Stokes regime}
\newcommand{\concept}[1]{\textit{#1}}
\newcommand{\R}{\mathbb{R}}
\newcommand{\slot}{\,\cdot\,} 
\newcommand{\T}{\mathsf{T}}
\newcommand{\D}{\mathsf{D}}
\newcommand{\id}{\textnormal{id}}
\DeclarePairedDelimiter\norm{\lVert}{\rVert}
\newcommand{\Ad}{\textnormal{Ad}}
\newcommand{\ad}{\textnormal{ad}}
\newcommand{\Am}{A_{\textnormal{mech}}}
\newcommand{\Av}{A_{\textnormal{visc}}}
\newcommand{\Gm}{\Gamma_{\textnormal{mech}}}
\newcommand{\Gv}{\Gamma_{\textnormal{visc}}}
\newcommand{\Il}{\mathbb{I}_{\textnormal{loc}}}
\newcommand{\I}{\mathbb{I}}
\newcommand{\bIl}{\bar{\mathbb{I}}_{\textnormal{loc}}}
\newcommand{\Vl}{\mathbb{V}_{\textnormal{loc}}}
\newcommand{\V}{\mathbb{V}}
\newcommand{\bVl}{\bar{\mathbb{V}}_{\textnormal{loc}}}
\newcommand{\g}{\mathfrak{g}}
\newcommand{\FL}{\mathbb{F}L}
\newcommand{\Fd}{\mathbb{F}}
\newcommand{\se}{\mathfrak{se}}
\newcommand{\SE}{\mathsf{SE}}
\newcommand{\Tor}{\mathbb{T}}
\newcommand{\bo}{\mathcal{O}}
\newtheorem{Th}{Theorem}
\newtheorem{Co}{Corollary}
\newtheorem{Prop}{Proposition}
\newcommand{\bv}[1]{\accentset{\scriptstyle\circ}{#1}}
\newcommand{\thistheoremname}{}
\newtheorem*{genericthm}{\thistheoremname}
\newenvironment{thmbis}[1]%
{\renewcommand{\thistheoremname}{Theorem~\ref{#1}$'$}%
  \begin{genericthm}}
{\end{genericthm}}
\theoremstyle{definition}
\newtheorem{Def}{Definition}
\newtheorem*{Def*}{Definition}
\theoremstyle{remark}
\newtheorem{Rem}{Remark}
\newcommand{\refFig}[1]{Fig.~\ref{#1}}
\newcommand{\refSec}[1]{\S\ref{#1}}
\newcommand{\refEqn}[1]{Eqn.~\eqref{#1}}
\newcommand{\refThm}[1]{Thm.~\ref{#1}}
\begin{document}

\maketitle

\begin{abstract}
Many forms of locomotion, both natural and artificial, are dominated by viscous friction in the sense that without power expenditure they quickly come to a standstill.
From geometric mechanics, it is known that for swimming at the ``Stokesian'' (viscous; zero Reynolds number) limit, the motion is governed by a reduced order ``connection'' model that describes how body shape change produces motion for the body frame with respect to the world.
In the ``perturbed Stokes regime'' where inertial forces are still dominated by viscosity, but are not negligible (low Reynolds number), we show that motion is still governed by a functional relationship between shape velocity and body velocity, but this function is no longer linear in shape change rate.
We derive this model using results from singular perturbation theory, and the theory of noncompact normally hyperbolic invariant manifolds (NHIMs).

Using the theoretical properties of this reduced-order model, we develop an algorithm that estimates an approximation to the dynamics near a cyclic body shape change (a ``gait'') directly from observational data of shape and body motion.
This extends our previous work which assumed kinematic ``connection'' models.
To compare the old and new algorithms, we analyze simulated swimmers over a range of inertia to damping ratios.
Our new class of models performs well on the Stokesian regime, and over several orders of magnitude outside it into the perturbed Stokes regime, where it gives significantly improved prediction accuracy compared to previous work.

In addition to algorithmic improvements, we thereby present a new class of models that is of independent interest.
Their application to data-driven modeling improves our ability to study the optimality of animal gaits, and our ability to use hardware-in-the-loop optimization to produce gaits for robots.
\end{abstract}

\tableofcontents

\section{Introduction}\label{sec:pertS-intro}
In this paper, we study how animals and robots move through space by deforming the ``shape'' of their body --- typically in a cyclic fashion --- to propel that body.
We call such motion-producing cyclic shape deformations \concept{gaits}.
We study a class of locomotion which includes swimming and crawling in viscous media, in which the viscous damping forces are large compared to the inertia of the body.
A classic exposition of such locomotors ``living life at low Reynolds number'' is given in \citet{purcell1977life}.
An important aspect of our work is that we consider the \concept{perturbed Stokes regime} \citep{eldering2016role} in which the inertia-damping ratio (or Reynolds number) is small but nonzero, as opposed to previous geometric mechanics literature addressing only the viscous or \concept{Stokesian limit} which formally assumes the inertia-damping ratio \emph{is} zero \citep{kelly1996geometry, kelly1995geometric, hatton2011geometric, hatton2013geometric, bittner2018geom}.
We note that our methods are related to the realization of nonholonomic constraints as a limit of friction forces \citep{brendelev1981realization, karapetian1981realizing, eldering2016realizing}.

For both scientific and engineering purposes, it is often of interest to ask whether a particular gait is optimal with respect to a goal function.
For animal locomotion, explicit equations of motion are nigh impossible to come by, and therefore directly testing animal gait optimality via analytical tools like the calculus of variations is not an option.
However, if a model can be obtained from experimental data for the local dynamics on a tubular neighborhood of the gait cycle --- i.e. a model valid for small variations in the gait cycle --- then local optimality tests can be formulated and evaluated on these models.
Such an approach was taken in \citet{bittner2018geom}, which introduced an algorithm informed by both geometric mechanics and data-driven techniques for studying oscillators \citep{RevGuk08, RevzenPhD09, revzen2015_SPIE}.

One limitation of \citet{bittner2018geom} was the assumption that motion was entirely kinematic, effectively assuming that the inertia-damping ratio is zero by assuming a \concept{viscous connection}-based model as introduced by \citet{kelly1995geometric} and to be discussed more below.
The real-world systems we are interested in have small --- but always nonzero --- inertia-damping ratio, and therefore we are interested in the extent to which the algorithm of \citet{bittner2018geom} can be improved.

By applying normally hyperbolic invariant manifold (NHIM) theory \citep{fenichel1971persistence, fenichel1974asymptotic, fenichel1977asymptotic, hirsch1977, fenichel1979geometric,eldering2013normally} in a singular perturbation context, we show that an exponentially stable invariant \concept{slow manifold} exists for small inertia-damping ratio (this was also shown in \citet{eldering2016role}).
Furthermore, this slow manifold is close to the viscous connection (viewed geometrically as a subbundle --- hence as a submanifold --- of state space), and therefore the dynamics restricted to the slow manifold are close to those assumed in the purely viscous case \citep{kelly1996geometry, kelly1995geometric, hatton2011geometric, bittner2018geom}, and reduce to those in the zero inertia-damping ratio limit.
Aside from its theoretical appeal, this result also has practical implications: it is possible to explicitly compute ``correction terms'' which, when added to the purely-viscous connection model, yield the dynamics restricted to the slow manifold.
The slow-manifold dynamics are provably more accurate than those of the idealized viscous connection model.
Additionally, they still enjoy the same useful properties of reduced dimension and symmetry under the group.
The computation of such correction terms is a fundamental technique in geometric singular perturbation theory \citep{fenichel1979geometric, jones1995geometric}, and has been used, e.g., to compute reduced-order models of robots with flexible joints \citep{spong1987integral}.

Given an algorithm that produces a data-driven local model of dynamics near a gait, we could conduct variational tests for local optimality of that gait with respect to any cost functional that the model allows us to evaluate.
Thus we have in mind two classes of application for the approach we present below: a biological application --- verification of whether a postulated goal function is optimized for an observed animal gait, and an engineering application --- optimization of robot gaits with ``hardware-in-the-loop'' by iteratively modeling and improving the gait with respect to a goal functional without the need for precise models of the robot or its interactions with the environment.

It is clear why our approach would be a boon to biology.
In most cases we cannot cajole animals to vary their gaits and observe whether that improves them.
Additionally, we rarely have detailed enough models of animal-environment interaction to allow gait optimality to be assessed from a model.

The value to gait optimization of robots comes from the fact that a gait, being a periodic continuous function of shape, is an infinite-dimensional object.
Thus, gait parameterizations are unavoidably of high dimension.
Any gradient calculation for optimization of a gait thus requires many tests to identify the influence of these many parameters.
Combined with the high practical cost of hardware experiments in terms of time and robot wear-and-tear, this renders hardware-in-the-loop optimization nigh infeasible.
We propose that by producing a tractably computable local model, we can resolve this problem.
The high-dimensional gradients can be computed by simulating the (local) model instead of directly using the hardware, decoupling the dimension of the gait parameterization from the number of experiments conducted on hardware.

It is our hope that, through a combination of geometric mechanics and NHIM theory, we can develop an algorithm which can serve the purposes of both biologists and engineers.

\subsection{Acknowledgements}
The authors were supported by NSF CMMI 1825918 and ARO grants W911NF-14-1-0573 and W911NF-17-1-0306 to Revzen.
Kvalheim would like to thank Jaap Eldering for introducing him to the relevance of NHIM theory to locomotion, for helpful comments and suggestions regarding the global asymptotic stability of the slow manifold of \refThm{th:pertS_NAIM_persists}, and for other useful suggestions.

\section{Background}

In studying locomotion, we will consider dissipative Lagrangian mechanical systems on a product configuration space $Q = S \times G$ with coordinates $(r,g)$, and with a Lagrangian of the form kinetic minus potential energy.
Here $S$ is the \concept{shape space} of the locomoting body, and $G$ is a Lie group (typically a subgroup of the Euclidean group $\SE(3)$ of rigid motions) representing the body's position and orientation in the world.\footnote{In a formal sense, one may start with generalized coordinates $Q$ and the action of $G$, and \emph{define} $S$ as a quotient manifold $Q/G$. %
    The details of this construction are not germane to our argument. %
    Instead, for simplicity we postulate the separation of configuration into ``shape'' and ``body-frame'' here, with the more general case treated in the appendices. %
}
We assume throughout this paper that $S$ is compact.
We will also assume that this system is subjected to external viscous drag forces which are linear in velocity.\footnote{We make this assumption for simplicity. %
    In principle, it should be possible to relax this assumption to derive modified but similar results for a force depending nonlinearly on velocities, as long as the linear approximation (with respect to velocities) of this force satisfies the same assumptions that we impose on our assumed linear force. %
}

If the physics of locomotion are independent of the body's position and orientation, then the Lagrangian $L(r,g,\dot{r},\dot{g})$ is independent of $g,\dot{g}$ and the viscous drag force $F_R(r,g,\dot{r},\dot{g})$ is equivariant in $g$ (on the $g,\dot{g}$ components).
Under this symmetry assumption, \citet{kelly1996geometry} derived general equations of motion satisfied by $g$ and by the \concept{body momentum}\footnote{
	Here $\g^*$ is the vector space dual of the Lie algebra $\g$ of $G$.
} $p\in \g^*$; these equations are essentially special cases of those derived in \citet{bloch1996nonholonomic}.
For a detailed statement and derivations of these equations, see \refSec{app:equations-derivation}.

Let us suppose that the kinetic energy metric of the body is scaled by a dimensionless inertial parameter $m > 0$, that the viscous drag force $F_R$ is scaled by a dimensionless damping parameter $c > 0$, and define $\epsilon \coloneqq \frac{m}{c}$ the dimensionless ratio of the two which is (up to scale) the Reynolds number in the case of fluid dynamics.
\citet{kelly1996geometry} showed that in the limit $\epsilon\to 0$, the equation of motion for $g$ becomes independent of $p$.
Defining the \concept{body velocity}\footnote{
	The body velocity is often written $g^{-1}\dot{g}$ by an abuse of notation which is only defined on matrix Lie groups where the product of a tangent vector and a group element is naturally defined. %
    For a general definition note that $\dot g \in \T_g G$, and the derivative of the left action $\D \mathrm{L}_{g^{-1}}$ restricts to a map $\T_g G \to \T_e G \cong \g$. %
    Hence the definition above.}
$\bv{g} \coloneqq \D\mathrm{L}_{g^{-1}}\dot g$, they obtained
\begin{equation}\label{eq:visc-conn-model}
\bv{g}=-\Av(r) \cdot \dot{r},
\end{equation}
where $\Av$ is called the \concept{local viscous connection}.

Away from the Stokes limit, \citet{eldering2016role} studied the \concept{perturbed Stokes regime} in which $\epsilon$ is assumed to be small but nonzero.
For $\epsilon$ sufficiently small they showed there is an exponentially stable invariant \concept{slow manifold} $M_\epsilon$, to which the dynamics converge.
We derive similar results tailored for our applications in \refSec{app:reduction-pert-stokes}.
Using an asymptotic series expansion for the slow manifold, in \refSec{app:reduction-pert-stokes} we also prove that the equations of motion for trajectories within $M_\epsilon$ take the form given by \refThm{th:pertS-dynamics-on-slow-manifold-simpler} below.
Hence trajectories of the full dynamics converge to solutions of Eqn. \eqref{eq:pertS-dynamics-slow-mfld-solved-for-BV-simpler-0} below, after a transient duration that goes to zero with $\epsilon$.

\begin{Th}\label{th:pertS-dynamics-on-slow-manifold-simpler}
	Assume that the shape space $S$ is compact.
	For sufficiently small $\epsilon > 0$, there exist smooth fields of linear maps $B(r)$ and bilinear maps $G(r)$ such that the dynamics restricted to the slow manifold $M_\epsilon$ satisfy
	\begin{equation}\label{eq:pertS-dynamics-slow-mfld-solved-for-BV-simpler-0}
	\bv{g} = -\Av(r) \cdot \dot{r} + \epsilon B(r)\cdot\ddot{r} +\epsilon G(r)\cdot(\dot{r},\dot{r}) + \bo(\epsilon^2).
	\end{equation}
\end{Th}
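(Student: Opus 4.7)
The plan is to derive \eqref{eq:pertS-dynamics-slow-mfld-solved-for-BV-simpler-0} as the first-order term of the asymptotic expansion of the slow-manifold graph whose existence is provided by the singular-perturbation/NHIM machinery invoked in \refSec{app:reduction-pert-stokes}. First, I would recast the Kelly--Murray reduced equations of motion in $(r, \dot{r}, p)$ coordinates in standard singular-perturbation form, with the body momentum $p$ (equivalently, the deviation of $\bv{g}$ from $-\Av(r)\cdot\dot{r}$) playing the role of the fast variable and $(r, \dot{r})$ the slow variables. At $\epsilon = 0$, the critical manifold $M_0$ is exactly the graph $\{p = p_0(r,\dot{r})\}$ that reproduces the viscous connection relation $\bv{g} = -\Av(r)\cdot\dot{r}$; normal hyperbolicity of $M_0$ (coming from positive-definiteness of the viscous drag tensor, which is what ultimately underlies \refThm{th:pertS_NAIM_persists}) then guarantees a smooth slow manifold $M_\epsilon$ that is a graph $p = h(r,\dot{r},\epsilon)$ over the same slow base and that lies within $\bo(\epsilon)$ of $M_0$.

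Next, I would write the asymptotic expansion $h(r,\dot{r},\epsilon) = p_0(r,\dot{r}) + \epsilon p_1(r,\dot{r}) + \bo(\epsilon^2)$ and impose the invariance condition: restricted to $M_\epsilon$, the identity $\dot{p} = \D_r h \cdot \dot{r} + \D_{\dot{r}} h \cdot \ddot{r}$ must coincide with the $p$-component of the full reduced vector field. Matching coefficients of $\epsilon^0$ merely reproduces the viscous connection. Matching coefficients of $\epsilon^1$ yields an explicit formula for $p_1$ in terms of $p_0$ and its derivatives: the contribution from $\D_{\dot{r}} p_0 \cdot \ddot{r}$ produces a term linear in $\ddot{r}$, while $\D_r p_0 \cdot \dot{r}$ (composed with the fact that $p_0$ is linear in $\dot{r}$) produces a term bilinear in $\dot{r}$. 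Finally, using the invertibility of the locked inertia tensor to translate the expansion from $p$ back to $\bv{g}$ yields the claimed expression, with $B(r)$ and $G(r)$ assembled explicitly from $\Av$, the locked inertia tensor, and the local coordinate forms of the kinetic-energy and drag tensors.

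The chief obstacle is algebraic bookkeeping: one must confirm that at order $\epsilon$ \emph{only} the two structures displayed in the theorem appear, with no residual terms that are, say, independent of $(\dot{r},\ddot{r})$ or cubic in $\dot{r}$. This is forced by the underlying symmetry assumptions together with the fact that the reduced Lagrangian is quadratic in velocities while the drag force is linear in velocities, so the reduced $\dot p$-equation is polynomial of low degree in $(\dot{r}, p, \ddot{r})$; once $p$ is replaced by its leading-order graph $p_0$ (which is itself linear in $\dot{r}$), the only surviving first-order contributions are of the two types claimed. Smoothness of $B$, $G$ and the $\bo(\epsilon^2)$ remainder then follow from the smoothness statement in the NHIM theorem together with Taylor's theorem applied to $h(r,\dot{r},\epsilon)$.
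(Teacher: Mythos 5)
Your overall strategy matches the paper's: write the reduced dynamics in singular-perturbation form, expand the slow-manifold graph in powers of $\epsilon$, and match coefficients at first order. However, the account of what produces the first-order coefficient has a genuine gap: you attribute $p_1$ entirely to the total time derivative of $p_0$ (the chain-rule terms $\D_{\dot r}p_0 \cdot \ddot r$ and $\D_r p_0 \cdot \dot r$), but the fast equation for $\dot p$ itself contains terms $\ad^*_{\Il^{-1}p}p - \ad^*_{\Am \cdot \dot r}p$ that sit at $\bo(\epsilon)$ relative to the leading fast dynamics and must also be matched at order $\epsilon^1$. For nonabelian $G$ (such as $\SE(2)$ in the paper's examples) these $\ad^*$ terms are not zero; after substituting $p_0$ they combine into a single $\ad^*_{\bv g}(\bar h_0)$ term that contributes a second, independent bilinear-in-$\dot r$ piece to $G(r)$. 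Moreover, because this term contains $\bv g$, simply solving for $\bv g$ via the connection equation is not enough --- one must either substitute the leading-order relation $\bv g = -\Av \cdot \dot r + \bo(\epsilon)$ into the $\ad^*$ term or carry out the Neumann-series inversion of $\id_\g + \epsilon\bVl^{-1}(C\cdot\bar h_0)$, which is exactly the step the paper's proof of Theorem~\ref{th:pertS-dynamics-on-slow-manifold-simpler} is devoted to. Your sketch does not encounter either step because it does not see the $\ad^*$ term.

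A secondary issue: you work in the variable $p$ directly, but $p = \Ad_g^* J$ carries a hidden factor of $m$ (since $\I = m\bIl$), so $h_0$ is actually $\bo(m)$. When $m$ is commensurate with $\epsilon$ this makes it nontrivial that the first-order correction to $\bv g$ is truly $\bo(\epsilon)$ and not $\bo(\epsilon^2)$; the paper sidesteps this by changing to the locked angular velocity $\Omega = \Il^{-1}p$ and working with $\bar h_0 = m^{-1}h_0$, so that all quantities are $\bo(1)$ in $m$. Your argument would need to check explicitly that the $m$ factors cancel when transcribing back to $\bv g$ (they do, since $\Il^{-1}$ supplies the compensating $m^{-1}$, but this is not automatic from the asymptotic expansion alone).
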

\begin{Rem}
	The bilinear maps or $(1,2)$ tensors $G(r)$ are \emph{not}, in general, symmetric: e.g., they are unlike Hessians.
\end{Rem}

\citet{bittner2018geom} developed a data-driven algorithm for approximating the equations of motion of a locomotion system assuming the model of \refEqn{eq:visc-conn-model}.
Here we define and study an extension of their approach to models of the form of \refEqn{eq:pertS-dynamics-slow-mfld-solved-for-BV-simpler-0}.
We examine the efficacy of this extension in modeling motion in the perturbed Stokes regime, in which $\epsilon$ is allowed to be small but nonzero.

\section{Estimating Data-Driven Models in the Perturbed Stokes Regime}
\label{sec:pertS-estimate-A}
\newcommand{\COT}{\text{COT}}
In this section, we develop a data-driven algorithm for estimating the dynamics \refEqn{eq:pertS-dynamics-slow-mfld-solved-for-BV-simpler-0} in a neighborhood of an exponentially stable periodic orbit.
We assume that the image of this periodic orbit is contained in the slow manifold $M_\epsilon$ of \refThm{th:pertS-dynamics-on-slow-manifold-simpler}, and for simplicity we assume that --- on the slow manifold --- $\ddot{r}=f(r,\dot{r})$ can be written autonomously as a function of $r$ and $\dot{r}$.
Letting $\gamma(t)$ denote the shape (or $r$) component of this periodic orbit, we refer to $\gamma$ as a \concept{gait}.

\subsection{Determination of regressors for estimation of the dynamics}\label{sec:determine-regressors}
In this section we closely follow the approach of \citet{bittner2018geom} to produce a data driven model of the dynamics from an ensemble of noisy trajectories near $\Gamma\coloneqq \text{Im } \gamma$.
We extensively use the Einstein summation convention in the regression equations below.

Let $T$ be the period of $\gamma$.
Since we assume that that the exponentially stable periodic orbit is contained in the slow manifold on which $\ddot{r}$ is of the form $\ddot{r}=f(r,\dot{r})$, it follows that there is an \concept{asymptotic phase} map $\phi\colon \T S \to [0,T)$ whose derivative along trajectories is equal to one \citep{isochrons}.
Given trajectory data $(r(t),\dot{r}(t)),~t\in[t_0,t_1]$, we assign asymptotic phase values $\phi_t \coloneqq \phi(r(t),\dot{r}(t))$  to each data point using an algorithm such as that of \citet{RevGuk08}.\footnote{In principle, any circle-valued ``phase'' function of state whose derivative along trajectories is positive could be used instead of asymptotic phase. We chose to use asymptotic phase because it is dynamically meaningful and there exist algorithms to compute it.}
After grouping data points according to their phase values, we construct Fourier series models of $\gamma,\dot{\gamma},\ddot{\gamma}$ as functions of phase.\footnote{In practice the Fourier series models of $\gamma,\dot{\gamma},\ddot{\gamma}$ might be computed from their own noisy data sets, and in this case the resulting Fourier models need not be derivatives of one another. We find that the use of matched filters is helpful in mitigating this issue; see \citet{bittner2018geom,RevzenPhD09} for more details.}

Next, we select $M$ evenly spaced values of phase, $\phi_1, \ldots, \phi_M$, to obtain values $\gamma_{m} := \gamma(\phi_m), \dot{\gamma}_m := \dot{\gamma}(\phi_m),\ddot{\gamma}_m := \ddot{\gamma}(\phi_m)$ --- the shapes, shape velocities, and shape accelerations of a system that is following the gait cycle precisely.
For each $m$ we collect from our trajectory data all triples $(r_n,\dot{r}_n,\ddot{r}_n)\coloneqq (r(t_{n}),\dot{r}(t_{n}),\ddot{r}(t_{n}))$ that are sufficiently close to $(\gamma_m,\dot{\gamma}_m,\ddot{\gamma}_m)$, i.e., such that $\|r_{n} - \gamma_m\|,\|\dot{r}_{n} - \dot{\gamma}_m\|,\|\ddot{r}_{n} - \ddot{\gamma}_m\|<\kappa$ for all\footnote{The astute experimentalist realizes that since the derivative terms contain $dt$ and $dt^2$ in their units, a certain degree of numerical conditioning can be obtained by judicious choice of units for time.
} $n$, and we also collect the corresponding $\bv{g}_n$ values.
We define the offsets $\delta_n := r_n - \gamma_m$, $\dot{\delta}_n\coloneqq \dot{r}_n-\dot{\gamma}_m$, $\ddot{\delta}_n\coloneqq \ddot{r}_n-\ddot{\gamma}_m$.
Note that the range of $n$ depends on $m$, but for notational simplicity we do not display this.

Introducing coordinates and Taylor expanding, \citet{bittner2018geom} obtained from \refEqn{eq:visc-conn-model} the following expression (no sum over $m$ or $n$):
\begin{eqnarray}\label{eq:pertS-bittner-regressors}
\bv{g}^k_n \approx - \underbrace{A^k_{m,i}\dot{\gamma}_m^i}_{C^k_{0,m}} - \underbrace{A^k_{m,i}}_{C^k_{1,m}}\dot{\delta}^i_n - \underbrace{\frac{\partial A^k_{m,i}}{\partial r^j}\dot{\gamma}_m^i}_{C^k_{2,m}}\delta^j_n -  \underbrace{\frac{\partial A^k_{m,i}}{\partial r^j}}_{C^k_{3,m}}\delta^j_n \dot{\delta}^i_n.
\end{eqnarray}
Omitted here are higher-order terms, the subscript of $\Av$, and the nonlinear $\gamma$ dependence of the local expression $A^k_i$.
They then operationalized \refEqn{eq:pertS-bittner-regressors} as a least-squares problem, written in matrix form as follows (for each $k$ and $m$; indices $k$ and $m$ elided below for clarity):
\begin{equation}\label{eq:advancedregression}
	\begin{bmatrix} \bv{g}_1 \\ \vdots \\ \bv{g}_N \end{bmatrix} =
    \begin{bmatrix}
	1,  &  \delta_1, & \dot{\delta}_1, & \delta_{1}\otimes\dot{\delta}_{1} \\
	\vdots & \vdots & \vdots & \vdots  \\
    1,  &  \delta_{N}, & {\dot\delta}_{N}, & \delta_{N}\otimes\dot{\delta}_{N} \end{bmatrix}
\cdot
    \begin{bmatrix}
	\widehat{C}_0 \\ \widehat{C}_1 \\ \widehat{C}_2 \\ \widehat{C}_3
	\end{bmatrix}
\end{equation}
where $\widehat{~}$ indicates ``estimated'' and $\otimes$ is the outer product.
For a $d$-dimensional shape space, the row of unknowns on the right consists of $1+d+d+d^2$ elements.
Once they have computed a least squares model for every $m$, they construct Fourier series so that the $\widehat{C}_i$ may be smoothly interpolated at any phase value.
The result is a local model of \refEqn{eq:visc-conn-model}.

In the perturbed Stokes regime which we seek to model, we follow a similar approach by expanding \refEqn{eq:pertS-dynamics-slow-mfld-solved-for-BV-simpler-0} instead of \refEqn{eq:visc-conn-model}.
We obtain (no sum over $m$ or $n$):
\begin{equation}
\begin{split}\label{eq:taylor-expanded-corrections}
\bv{g}^k_n &\approx - A^k_{m,i}\dot{\gamma}^i_m - A^k_{m,i}\dot{\delta}^i_n - \frac{\partial A^k_{m,i}}{\partial r^j}\delta^j_n \dot{\gamma}^i_m - \frac{\partial A^k_{m,i}}{\partial r^j}\delta^j_n \dot{\delta}^i_n + \epsilon \left( B^k_{m,i}\ddot{\gamma}^i_m + B^k_{m,i}\ddot{\delta}^i_n + \frac{\partial B^k_{m,i}}{\partial r^j}\delta^j_n \ddot{\gamma}^i_m \right. \\ \ldots & + \frac{\partial B^k_{m,i}}{\partial r^j}\delta^j_n \ddot{\delta}^i_n + G^k_{m,i,j}\dot{\gamma}^i_m \dot{\gamma}^j_m + G^k_{m,i,j}\dot{\gamma}^i_m \dot{\delta}^j_n + G^k_{m,i,j}\dot{\delta}^i_n \dot{\gamma}^j_m + G^k_{m,i,j}\dot{\delta}^i_n \dot{\delta}^j_n \\
\ldots & + \left. \frac{\partial G^k_{m,i,j}}{\partial r^{\ell}}\delta^\ell_n \dot{\gamma}^i_m \dot{\gamma}^j_m + \frac{\partial G^k_{m,i,j}}{\partial r^{\ell}}\delta^\ell_n \dot{\gamma}^i_m \dot{\delta}^j_n + \frac{\partial G^k_{m,i,j}}{\partial r^{\ell}}\delta^\ell_n \dot{\delta}^i_n \dot{\gamma}^j_m  + \frac{\partial G^k_{m,i,j}}{\partial r^{\ell}}\delta^\ell_n \dot{\delta}^i_n \dot{\delta}^j_n \right).
\end{split}
\end{equation}
	Partitioning these terms according to their dependence on the observations $\delta$, $\dot{\delta}$, and $\ddot{\delta}$, we obtained
	\begin{equation}
	\begin{split}
	\bv{g}^k_n &\approx \left(- A^k_{m,i}\dot{\gamma}^i_m + \epsilon B^k_{m,i}\ddot{\gamma}^i_m + \epsilon G^k_{m,i,j}\dot{\gamma}^i_m \dot{\gamma}^j_m\right) +
	\left(-\frac{\partial A^k_{m,j}}{\partial r^i} \dot{\gamma}^j_m + \epsilon \frac{\partial B^k_{m,j}}{\partial r^i}\ddot{\gamma}^j_m + \epsilon \frac{\partial G^k_{m,j,\ell}}{\partial r^{i}}\dot{\gamma}^j_m \dot{\gamma}^\ell_m \right)\delta^i_n \\
	\ldots & + \left(- A^k_{m,i} + \epsilon G^k_{m,j,i}\dot{\gamma}^j_m + \epsilon G^k_{m,i,j} \dot{\gamma}^j_m\right)\dot{\delta}^i_n + \left(- \frac{\partial A^k_{m,j}}{\partial r^i} + \epsilon \frac{\partial G^k_{m,\ell, j}}{\partial r^{i}}\dot{\gamma}^\ell_m + \epsilon \frac{\partial G^k_{m,j,\ell}}{\partial r^{i}} \dot{\gamma}^\ell_m\right)\delta^i_n\dot{\delta}^j_n\\
	\ldots & + \epsilon \left( B^k_{m,i}\,\ddot{\delta}^i_n + \frac{\partial B^k_{m,j}}{\partial r^i} \,\delta^i_n\ddot{\delta}^j_n+ G^k_{m,i,j}\,\dot{\delta}^i_n\dot{\delta}^j_n +  \frac{\partial G^k_{m,j,\ell}}{\partial r^{i}} \,\delta^i_n \dot{\delta}^j_n \dot{\delta}^\ell_n\right),
	\end{split}
	\end{equation}
	giving a similar least squares problem written in matrix form as follows (for each $k$ and $m$; indices $k$ and $m$ elided below for clarity):
\begin{equation}\label{eq:advancedregression2}
	\begin{bmatrix} \bv{g}_1 \\ \vdots \\ \bv{g}_N \end{bmatrix} =
    \begin{bmatrix}
	1,  &  \delta_1, & \dot{\delta}_1, & \ddot{\delta}_1 & \delta_{1}\otimes\dot{\delta}_{1} & \delta_{1}\otimes\ddot{\delta}_{1} & \dot{\delta}_{1}\otimes\dot{\delta}_{1} & \delta_{1}\otimes\dot{\delta}_{1}\otimes\dot{\delta}_{1} \\
	\vdots & \vdots & \vdots & \vdots & \vdots & \vdots & \vdots & \vdots \\
    1,  &  \delta_{N}, & {\dot\delta}_{N}, & \ddot{\delta}_N & \delta_{N}\otimes\dot{\delta}_{N} & \delta_{N}\otimes\ddot{\delta}_{N} & \dot{\delta}_{N}\otimes\dot{\delta}_{N} & \delta_{N}\otimes\dot{\delta}_{N} \otimes\dot{\delta}_{N}  \end{bmatrix}
\cdot
    \begin{bmatrix}
	\widehat{C}_0 \\ \widehat{C}_1 \\ \widehat{C}_2 \\ \widehat{C}_3 \\ \widehat{C}_4 \\ \widehat{C}_5 \\ \widehat{C}_6 \\ \widehat{C}_7
	\end{bmatrix}
\end{equation}
For a $d$-dimensional shape space, the row of unknowns on the right consists of $1+d+d+d+d^2+d^2+d^2+d^3$ elements.
Once we have computed a least squares model for every $m$, we similarly construct Fourier series so that the $\widehat{C}_i$ may be smoothly interpolated at any phase value.
The result is a local model of \refEqn{eq:pertS-dynamics-slow-mfld-solved-for-BV-simpler-0}.

Because it is the only term of order $\kappa^3$, we find that in practice the 3-index regressor $\delta \otimes \dot{\delta} \otimes \dot{\delta}$ can often be omitted if $\kappa>0$ is sufficiently small.
In the remainder of this paper, we refer to the regressors of \refEqn{eq:advancedregression2} (with the 3-index term excluded) as the ``perturbed Stokes regressors'', and refer to those used in the \citet{bittner2018geom} algorithm as the ``Stokes regressors.''

\begin{Rem}
	All tensors appearing in \refEqn{eq:pertS-bittner-regressors} and \refEqn{eq:taylor-expanded-corrections} are not necessarily symmetric, and therefore the order of terms matters.
\end{Rem}

\begin{Rem}
	Examining \refEqn{eq:pertS-bittner-regressors}, we see that there are some constraints that the regression does not enforce.
	Namely, $C_0 = \left[C_1\right]_i \dot \gamma^i$ and $C_2 = \left[C_3\right]_i \dot \gamma^i$.
	When we performed regressions ignoring these implicit constraints, we found that the constraints are not respected in the results.
	However, an important consequence of \refEqn{eq:taylor-expanded-corrections} is that, for systems operating in the perturbed Stokes regime, such a mismatch is actually to be expected --- this is because some independent new terms appear in $C_1,\ldots,C_3$ which break the constraints.
\end{Rem}

\subsection{Local models enable optimality testing and optimization}\label{sec:utility-of-models}
The data-driven models computed by the process described above have predictive power locally, in a neighborhood of a gait cycle.
For any shape trajectory inside this neighborhood, we can used the local model to predict the trajectory of the body in the world.
We assume that we are interested in some $\R$-valued goal functional $\tilde{\phi}(\gamma,g_\gamma)$ defined on an appropriate space of trajectories.
Here the group trajectory $g_\gamma(t)$ is determined by the gait $\gamma(t)$ via \refEqn{eq:pertS-dynamics-slow-mfld-solved-for-BV-simpler-0}, and therefore we may consider the goal functional $\phi(\gamma) := \tilde{\phi}(\gamma,g_\gamma)$ to be a function of $\gamma$ alone.

\paragraph*{Testing for Optimality}---~
We can test the gait of an organism for optimality by checking that $0 = \frac{\partial}{\partial s}\phi(\gamma_s)|_{s=0}$
for all smooth variations $\gamma_s$ of a gait $\gamma$ (where $\gamma_0 = \gamma$).
This condition is necessary for local optimality, but depending on the choice of $\phi$ it is often possible to argue on physical grounds that its satisfaction is also sufficient for optimality.
While this variational condition can be used to derive a PDE via the Euler-Lagrange approach, a more computationally straightforward approach is to consider a finite- (but often high-) dimensional family $\gamma_p$ with $p \in \R^N$, and numerically computing the gradient $\nabla_p \phi(\gamma_p)$.
When this gradient is sufficiently small at some parameter $p_*$, then it might be possible to argue that the gait is nearly extremal (or possibly optimal) with respect to $\phi$.\footnote{In some cases this procedure is provably correct.
    Furthermore, suitable finite-dimensional families that provide these guarantees always exist \citep[Sec.~16]{milnor1969morse}.
    We do not discuss these technicalities any further here.}
    Since we can compute $\phi$ using a data-driven model around $\gamma_p$, we can compute $\nabla_p \phi(\gamma_p)$.
    We can do so \emph{directly from observation} and without need for any general model of body-environment interactions, so long as use of \refThm{th:pertS-dynamics-on-slow-manifold-simpler} can be justified.

\paragraph*{Optimizing Gaits}---~
We can use the gradient $\nabla_p \phi(\gamma_p)$ to iteratively improve the gait of a robot whose dynamics satisfy \refThm{th:pertS-dynamics-on-slow-manifold-simpler} without requiring any further details of the physics.
Taking parameter set $p$ we compute the next iterate $p' := p + \alpha \nabla_p \phi(\gamma_p)$, with the step-size scaling $\alpha>0$ chosen to ensure that $p'$ is within the domain for which our local model of $\phi$ is valid, using the approach of \citet[Sec. 7.2]{bittner2018geom}.
For each gait $\gamma_p$, we only require enough experimental data for building a good local model of $\phi$ near $\gamma_p$ --- a dataset whose size does not depend on the dimension of the representation $p$.
We plan to use this decoupling to perform hardware-in-the-loop optimization to produce rapid adaptation of robot motions in the face of foreign environments, mechanical failures, and more.

\section{Performance Comparison of the Two Data-Driven Models}\label{sec:performance}

One of the primary contributions of this paper is the introduction of new regressors based on \refThm{th:pertS-dynamics-on-slow-manifold-simpler}, which we use to augment the regressors used in the algorithm of \citet{bittner2018geom} for estimating the dynamics near a gait.
These allow us to extend the domain of validity of their algorithm from the Stokesian limit to include the perturbed Stokes regime.
To demonstrate this, we constructed a swimming model which we simulated at various Reynolds numbers, and tested the ability of the two types of local models to predict the results of the fully nonlinear simulation.\footnote{All of these simulations did not account for fluid-fluid interactions; as such we make no claim that they are physically meaningful at the higher Reynolds number in the ranges shown.
}
\subsection{Modeling a swimmer}

\begin{figure}
	\centering

		\def\svgwidth{\textwidth}
		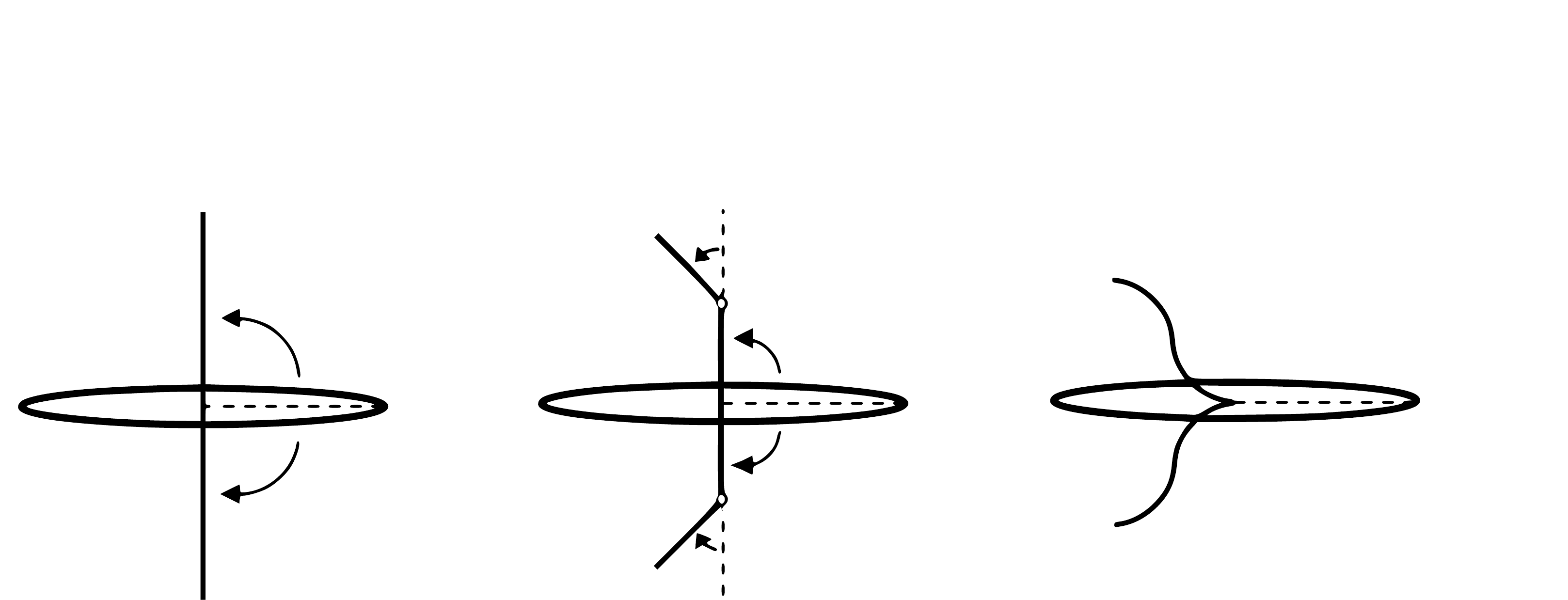
		\caption{Schematic representation of our swimming model. %
        A single body (ellipses with center of mass marked) of mass $m$ and moment of inertia $m\bar{I}$ is attached to two identical paddles each comprising 1 (left), 2 (middle), or $\frac{n}{2}$ (right) segments. %
        The length of the body is $L$, and the total length of each of the two paddles is $d$. The length of each segment is $\frac{d}{n}$. %
        } \label{fig:system}
\end{figure}

We tested the prediction quality of both models on a  swimming model.
The system shown in \refFig{fig:system} had uniformly distributed mass along a central body, with two paddles comprising chains of massless links extending from the center of the body.
Each paddle could be broken up into an arbitrary number $\frac{n}{2}$ ($n$ even) of equally spaced links, which sum to a constant total length independent of $n$.
This allowed us to vary the behavior of the system from one reminiscent of a boat with oars (for $n = 2$) to one more like a bacterial cell with flagella (for $n$ large).

The system moves in a homogeneous and isotropic plane.
Its configuration space is $S\times G = \Tor^{n} \times \SE(2)$: the $n$-torus and the special Euclidean group of planar rigid motions $\SE(2)$.
We assume the dynamics are equivariant under $\SE(2)$.
The group element $g \in \SE(2)$ provides the position and orientation of the central body in world coordinates with respect to a fixed inertial reference frame.
Hereon we represent $g$ as a column vector $g = [x,y,\theta]^T$, and similarly represent $\dot{g}$ as a column vector.
We define the body velocity
\begin{equation}
\bv{g} = \begin{bmatrix}  \cos(\theta) & \sin(\theta) & 0 \\
-\sin(\theta) & \cos(\theta) &
0 \\ 0 & 0 & 1 \end{bmatrix} \dot{g}.
\end{equation}
We treat the link at the main body (length $L$) and the links comprising the paddles (length $d$) as slender members, and model their drag forces according to Cox theory \citep{cox1970motion} using the drag matrices
\begin{equation}
C_{\frac{d}{n}} =
c \begin{bmatrix}
C_x \frac{d}{n} & 0 & 0\\ 0 & C_y \frac{d}{n}& 0\\0 & 0 & \frac{1}{12} (\frac{d}{n})^3 C_y
\end{bmatrix}, \quad
C_{_L} =
c \begin{bmatrix}
C_x L & 0 & 0\\ 0 & C_y L& 0\\0 & 0 & \frac{1}{12}L^3 C_y
\end{bmatrix},
\end{equation}
where the factor $c > 0$ is explicitly written for later scaling purposes.
The drag coefficient ratio $C_y/C_x$ has a maximum value of $2$ corresponding to the limit of infinitesimally thin segments, and we will assume this limiting ratio here (c.f. \citet[Sec.~2.B]{hatton2013geometric}).
Given these drag matrices, the wrench on the central link can be written as
\begin{equation}
F_{\textnormal{body}} = c \bar{F}_{\textnormal{body}} =   -C_{_L}
\bv{g}.
\end{equation}
The wrench that the segments (denoted $i$) apply on the body can be written as
\begin{equation}
F_{i} = c \bar{F}_i
= -W_i
C_{\frac{d}{n}}
V_i
\begin{bmatrix}
\bv{g} \\ \dot{\alpha}
\end{bmatrix},
\end{equation}
where the linear map $W_i(g,\alpha)\colon \se(2)^*\to \se(2)^*$ maps a wrench on link $i$ to a wrench on the body and the linear map $V_i(g,\alpha)\colon \se(2)\to\se(2)$ maps a velocity in the body frame to a velocity in the link frame. Let $R_\beta$ denote the counterclockwise rotation of the plane by angle $\beta$, define $e_2\coloneqq [0,1]^T$, and write $\bv{g} = [\bv{g}_{x,y}^T,\dot{\theta}]^T$.
Then, for the $n$-segment model (recall that $n$ must be even), for $i \in \{1,\ldots, n\}$ the linear maps $V_i$ and $W_i$ are given by
\begin{equation}
\begin{split}
V_i \cdot \begin{bmatrix}\bv{g}\\\dot{\alpha}\end{bmatrix} &=  \begin{bmatrix}R_{\alpha_*+\cdots+\alpha_i}^{-1}\bv{g}_{x,y}+ \left(\frac{d}{2n}\left(\dot{\theta}+\sum_{k=*}^i\dot{\alpha}_k\right)+\frac{d}{n}\sum_{k=*}^{i-1}\left(\dot{\theta}+\sum_{j=*}^k\dot{\alpha}_j\right)R^{-1}_{\alpha_{k+1}+\cdots+\alpha_i}\right) e_2 \\ \dot{\theta} + \sum_{k=*}^i\dot{\alpha}_k\end{bmatrix}\\
W_i \cdot \begin{bmatrix}f\\\tau\end{bmatrix} &= \begin{bmatrix}R_{\alpha_*+\cdots+\alpha_i}f\\ \tau + e_2^T\left(\frac{d}{2n}I_{2\times 2} + \frac{d}{n}\sum_{k=*+1}^{i}R_{\alpha_k+\alpha_{k+1}+\cdots + \alpha_i} \right)\cdot f\end{bmatrix},
\end{split}
\end{equation}
where $* \coloneqq 1+ \llfloor i / \frac{n}{2}\rrfloor \cdot \frac{n}{2}\in \{1,\frac{n}{2}+1\}$, $f = [f_1,f_2]^T$, and where a summation is understood to be zero if the lower bound of its index set exceeds its upper bound.

These wrenches act on the body (which has uniformly distributed mass $m$ and moment of inertia $I = m\bar{I}$ about its midpoint) yielding the following equations of motion in world coordinates:
\begin{equation} \label{eq:pbdyn}
\ddot{g} = \begin{bmatrix} \ddot{x} \\ \ddot{y} \\ \ddot{\theta}  \end{bmatrix} =
\frac{1}{\epsilon} \begin{bmatrix}
1 & 0 & 0 \\ 0 & 1 & 0 \\ 0 & 0 & \frac{1}{\bar{I}}
\end{bmatrix}
\begin{bmatrix}  \cos(\theta) & -\sin(\theta) & 0 \\
\sin(\theta) & \cos(\theta) &
0 \\ 0 & 0 & 1 \end{bmatrix}
\Bigg(
\bar{F}_{\textnormal{body}} + \sum_{i=1}^n\bar{F}_{i}
\Bigg),
\end{equation}
where $\epsilon \coloneqq \frac{m}{c}$ is the dimensionless inertia-damping ratio.
In keeping with our earlier conventions that $m$, $c$, and $\epsilon$ are all dimensionless
we think of the ``$1$'' terms on the diagonal in \refEqn{eq:pbdyn} as having units of inverse time.

Upon inspection of \refEqn{eq:pbdyn}, we see that by modifying $\epsilon$ we can directly adjust the ratio of inertial to viscous forces in the swimming model.
The Stokesian limit corresponds to $\epsilon\to 0$; on the other hand, the $\epsilon \to \infty$ limit corresponds to a fully ``momentum-dominated'' regime, wherein viscous effects are negligible and motion is governed by conservation of momentum via Noether's theorem (see Corollary \ref{co:pertS-Noether} \refSec{sec:pertS-mech-visc-conn}).
In the following \refSec{sec:model-accuracy-testing} we simulate the swimming model at a variety of $\epsilon$ values, and compare the performance of the two algorithms for estimating the dynamics near a gait cycle.

\subsection{Comparison of the estimated models}\label{sec:model-accuracy-testing}
In all simulations in this section, we used the parameter values $L = 1$, $d = 0.5$, $C_x = 1$, $C_y = 2$, and $\bar{I} = 1$.
The only remaining free variable is $\epsilon$, which governs both the ratio of inertial to viscous forces and the rate of attraction to the slow manifold.
The procedure we used for generating simulations for experiments in this section is identical to that described in \citet{bittner2018geom}.
Briefly, an experiment consists of 30 cycles of a numerically integrated stochastic differential equation (SDE) representing shape space dynamics consisting of a deterministic oscillator perturbed by system noise (see \citet[Sec.~6.2]{bittner2018geom} for precise details on the SDE, parameter values used, etc.).

We used these noisy shape dynamics to drive the body momentum and group dynamics via the full equations of motion \refEqn{eq:pertS-equations-motion-local-w-epsilon} derived in \refSec{sec:pertS-reduction-Stokes-limit}.
For each simulation we recorded a ``ground truth'' body velocity trajectory $\bv{g}_G$.
We used this record to evaluate the accuracy of the data-driven approximations.
We denoted the body velocity computed with the perturbed Stokes regressors by $\bv{g}_{p}$, and those computed with the Stokes regressors by $\bv{g}_{s}$.

As a ``zeroth-order'' phase model of the dynamics, we constructed a Fourier series model of $\bv{g}_G$ with respect to the estimated phase (see \refSec{sec:determine-regressors}), which we denote by $\bv{g}_a$.
For any data point, the zeroth-order model prediction is $\bv{g}_a(\varphi)$ for the phase $\varphi$ of that data point.

We computed the RMS errors $e^k_*$ for each component $k$ of the body velocity and each model $* = p,s,a$ by $e^k_* := \langle|\bv{g}^k_*-\bv{g}^k_G|^2\rangle^{1/2}$.
Since the numerical value of these errors means little, we defined the metric $\Gamma^k_* := 1 - e^k_* / e^k_a$ for $* = p,s$ to indicate how much better the regression models were performing compared to the zeroth-order phase model $\bv{g}_a$.
A $\Gamma^k_*$ of $0$ indicates doing no better than the zeroth order model whereas a $1$ indicates a perfect model.
To further highlight the \emph{difference} in prediction quality, we also plot $\Delta^k := \Gamma^k_p-\Gamma^k_s$.

\begin{figure}
\centering
\includegraphics[clip, trim=0.cm 0cm 0cm 0cm, width=1.0\textwidth]{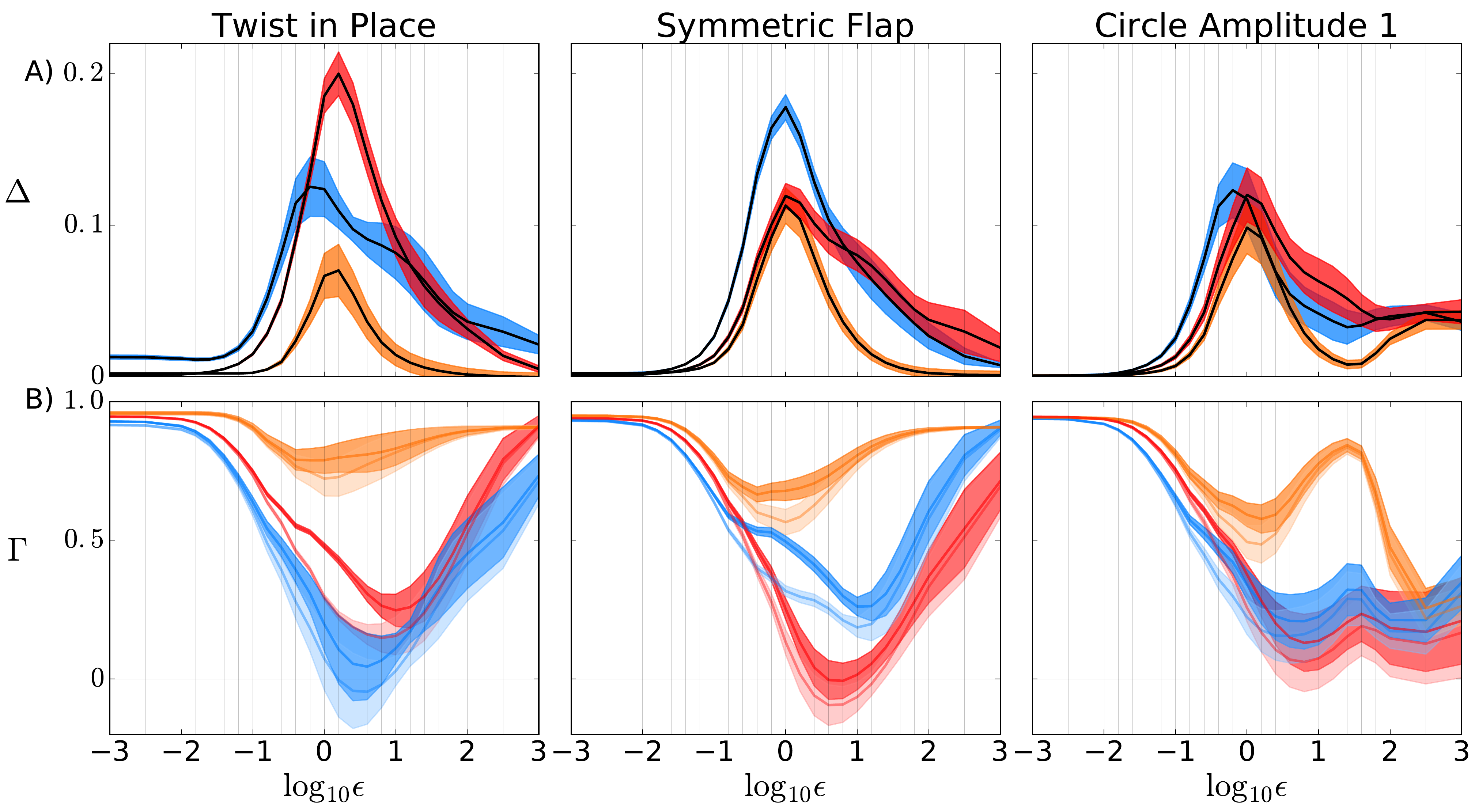}
\includegraphics[clip, trim=3.5cm 12cm 5.75cm 12cm, width=1.0\textwidth]{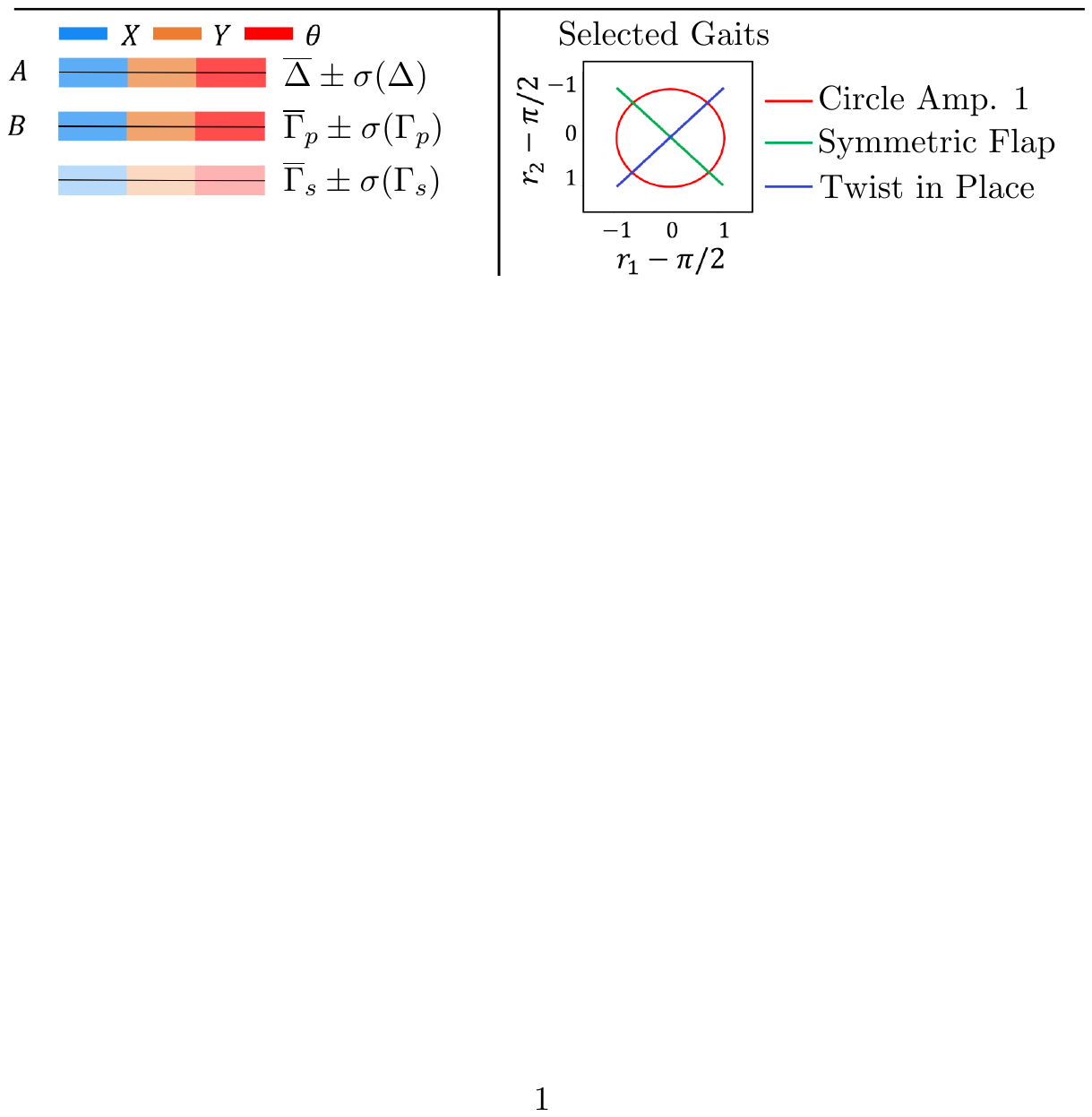}
\caption{ %
Comparison of model prediction quality when using the perturbed Stokes regressors versus the Stokes regressors on three gaits, in terms of the $\Gamma$ and $\Delta$ quality metrics. %
We have plotted the components of $\Delta$, representing the relative advantage of perturbed Stokes regressors (top row; (A)), and $\Gamma$, representing model prediction quality (bottom row; (B)), against 6 orders of magnitude variation in the inertial to viscosity ratio $\epsilon$ (logarithmic scale; sampled at 25 values (vertical gray lines). %
We present three gaits, whose shape space loci are in-phase paddle angle (which leads to anti-phase paddle motions; ``Twist in Place''; left column; blue line in shape-space plot), anti-phase paddle angle (bilaterally symmetric paddle motions; ``Symmetric Flap''; middle column; green line in shape-space plot), and quarter-cycle out of phase paddle angles (``Circle Amp. 1''; right column; red line in shape-space plot). %
All three gaits have paddle angles ranging between $-1$ and $1$ radians. %
For each value of $\epsilon$ we performed 8 simulation trials each consisting of 30 (noisy) gait cycles, and plotted mean and standard deviation of $\Delta$ and $\Gamma$ for each component of the $\mathfrak{se}(2)$ body motion ($X$ blue; $Y$ orange; $\theta$ red; saturated for $\Delta$ and $\Gamma_p$, pale for $\Gamma_s$). %
Consistently for all components and gaits, the perturbed Stokes regressors provide a better model for an order of magnitude or wider range of $\epsilon$ around $\epsilon=1$. %
For Twist in Place and Symmetric Flap gaits, both models are accurate for large and small $\epsilon$ ($\Gamma$ close to 1); for the Circle Amplitude 1 gait, the prediction is only accurate for the Stokes regime (small $\epsilon$). }
\label{fig:r1}
\end{figure}

\begin{figure}
\begin{center}
\includegraphics[clip, trim=0.cm 0cm 0cm 0cm, width=1.0\textwidth]{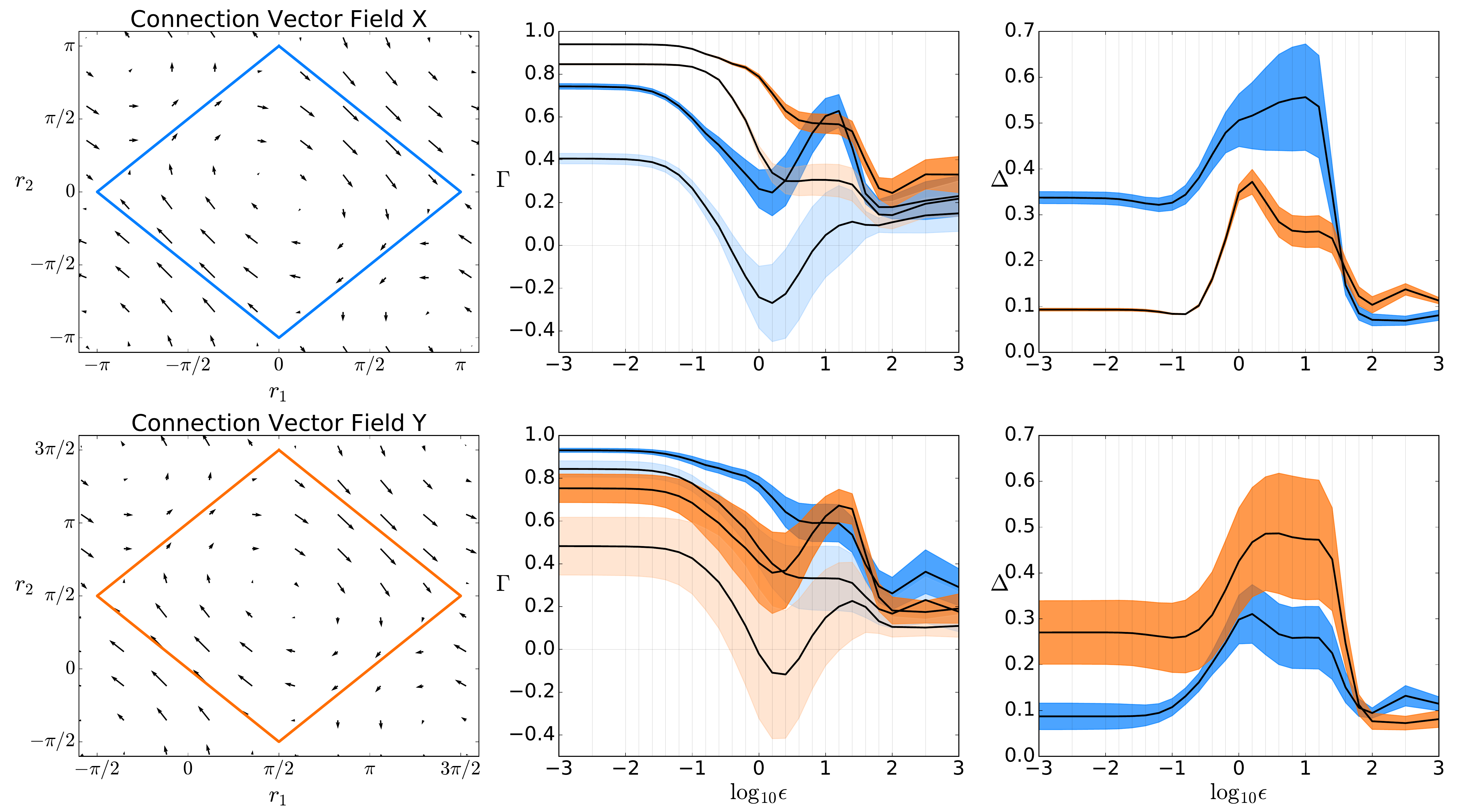}
\includegraphics[clip, trim=2.2cm 13.35cm 5.25cm 13.35cm, width=1.0\textwidth]{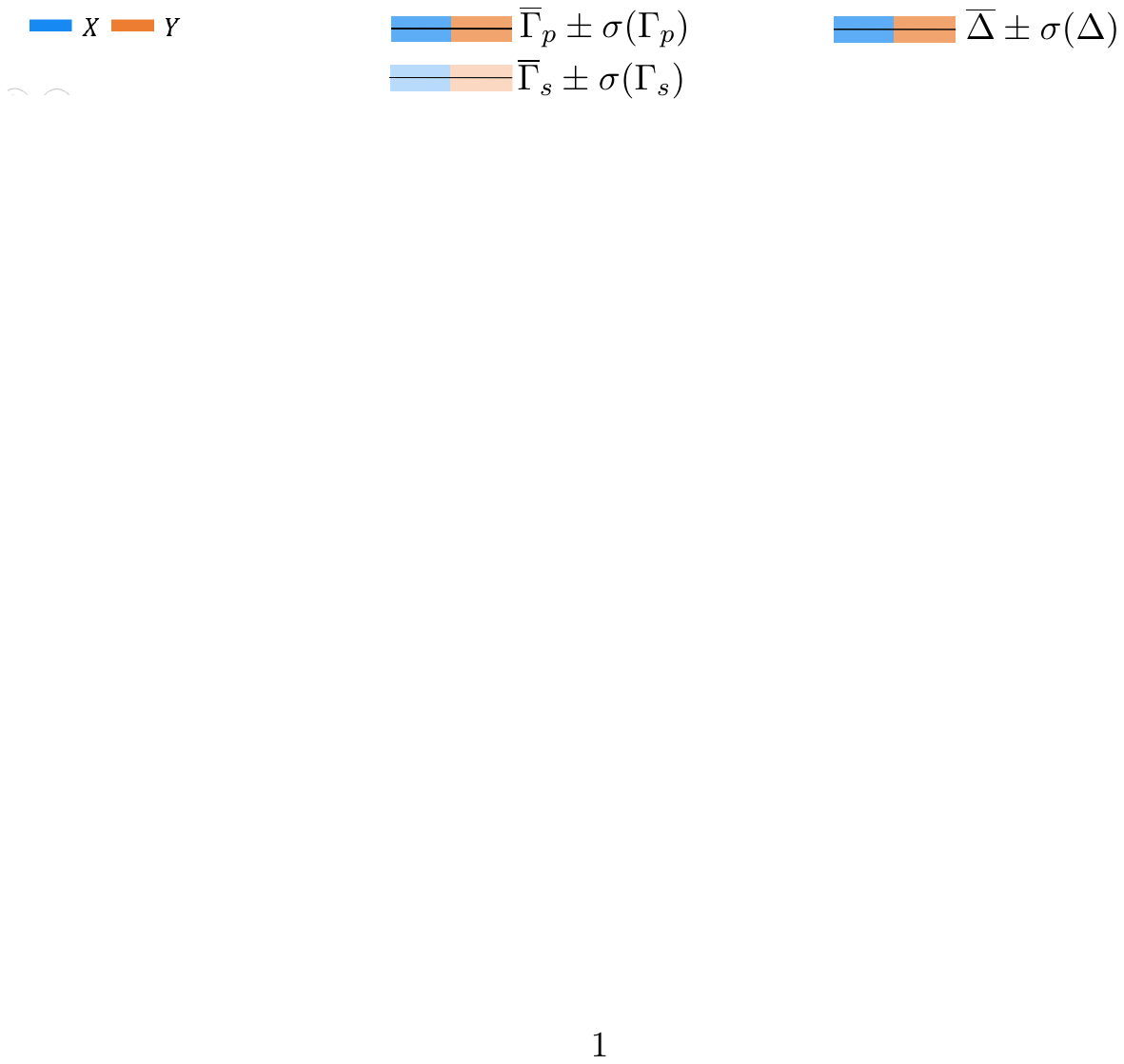}
\caption{ %
Comparison of model prediction quality when using the perturbed Stokes regressors versus the Stokes regressors on two extremal gaits, in terms of the $\Gamma$ and $\Delta$ quality metrics. %
Plots consist of the same types as those in \refFig{fig:r1}. %
We only plot the $X$ (blue) and $Y$ (orange) components of $\Gamma$ (middle column; saturated color $\Gamma_p$; pale colors $\Gamma_s$) and $\Delta$ (right column). %
We selected the gait to maximize either the $X$ component of total body frame motion (top row) or the $Y$ component (bottom row). %
The gaits are extremal in the Stokes regime ($\epsilon=0$) and selected by taking the zero level set of the connection curvature (method from \citet{hatton2011geometric, hatton2013geometric}). %
Following their approach, we plot the connection of the coordinate being optimized as a vector field over the shape-space (black arrows; left column), with the shape-space gait locus plotted over it (diamond shapes in left column, colored by coordinate optimized). %
Results show that both models are most accurate for small $\epsilon$ (the Stokes regime; $\Gamma$ closer to $1$), with the perturbed Stokes regressors providing improvements across the entire range. %
Over the two order of magnitude range of $10^{-0.5}<\epsilon<10^{1.5}$ this advantage is noticeably more pronounced (the perturbed Stokes regime; bump in $\Delta$ plots). %
Also note that the $X$ extremal gait shows much greater $\Delta^x$; the $Y$ extremal gait shows much greater $\Delta^y$.
}\label{fig:r2}
\end{center}
\end{figure}

\begin{figure}
\begin{center}
\includegraphics[clip, trim=0.cm 0cm 0cm 0cm, width=\textwidth]{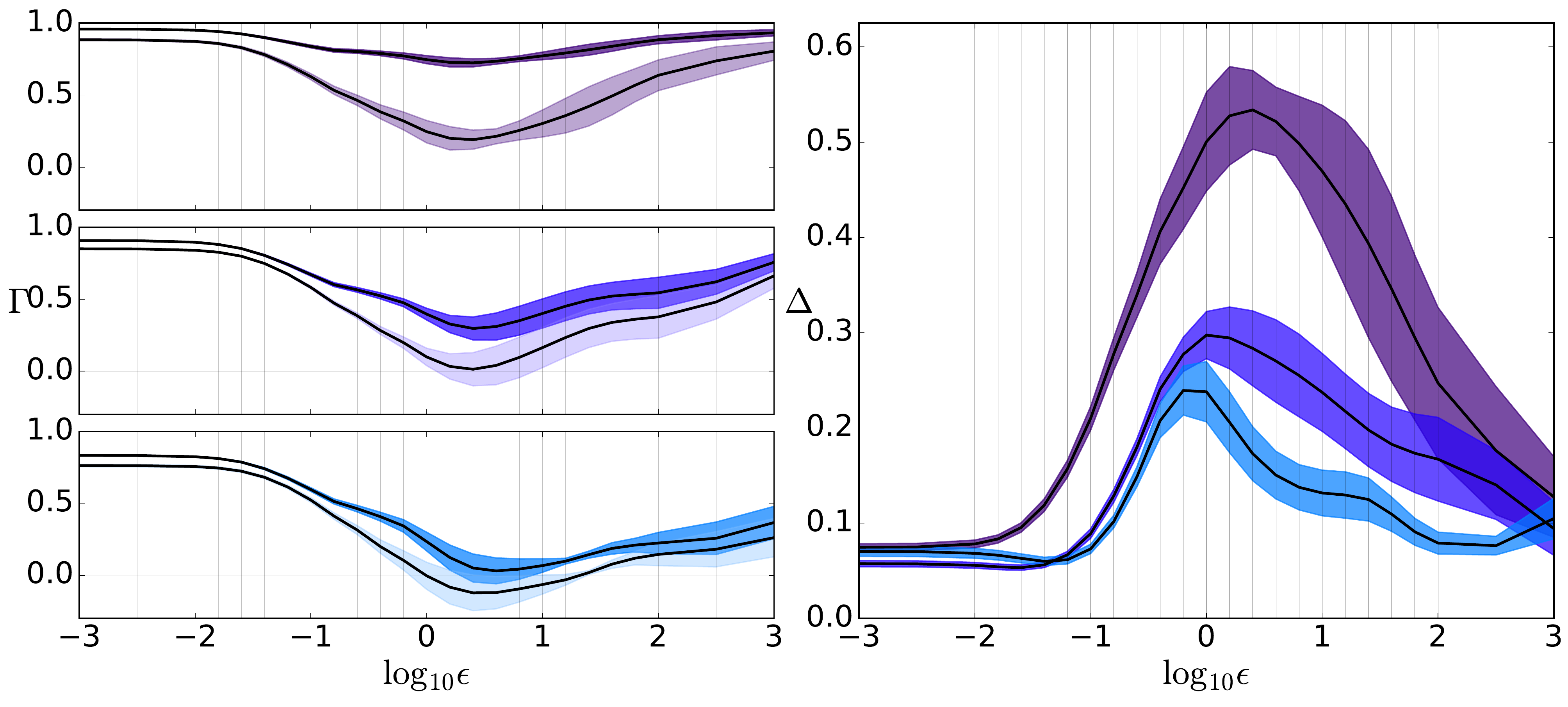}
\includegraphics[clip, trim=0cm 14.5cm 4.25cm 0cm, width=1.0\textwidth]{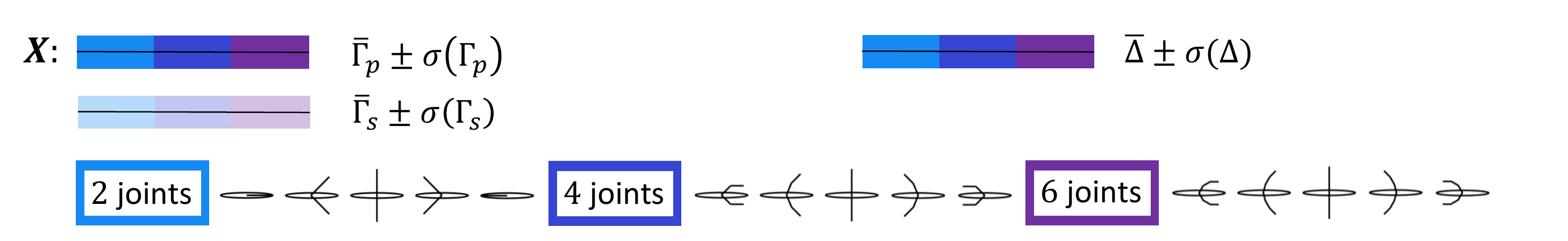}\caption{Comparison of model prediction quality when using the perturbed Stokes regressors versus the Stokes regressors on paddles with different dimensions of the shape space, shown in terms of the $\Gamma$ and $\Delta$ quality metrics. %
Plots consist of the same types as those in \refFig{fig:r1}. %
We plotted $\Gamma$ and $\Delta$ of three swimmers with different numbers of paddle segments: one segment per paddle (light blue), two segments (blue), and three segments (purple); see \refFig{fig:system} for schematic. %
We used a symmetric flapping gait (see \refFig{fig:r1}; small cartoons above). %
The paddles moved symmetrically with total angles of all joints summing up to a sinusoid of amplitude $\pi$.
We plot the $X$ components of $\Gamma$ (left column; one plot per model; saturated colors $\Gamma_p$; pale colors $\Gamma_s$) and $\Delta$ (right column). %
Results show that over the two order of magnitude range of $10^{-0.5}<\epsilon<10^{1.5}$, the perturbed Stokes regressors consistently provide improvements.
The relative improvement $\Delta$ increased markedly with shape space dimension, by as much as $0.5$ in $\Delta$.
}
\label{fig:r3}
\end{center}
\end{figure}

\subsubsection{Algorithm comparison using manually selected gaits}\label{sec:performance-manual-gaits}

We chose to first test the modeling approaches on a collection of simple manually selected behaviors.
These include behaviors we term  ``twist in place'' and ``symmetric flapping'' gaits, both of which initialize with paddles aligned at a quarter turn away from the body (as depicted in the two-segment model in Figure \ref{fig:system}), and respectively involve anti-symmetric and symmetric sinusoidal movement of the paddles with amplitude $1$.
The ``symmetric flapping gait'' primarily moves in the direction of the $x$ body axis, while the ``twist in place gait'' primarily changes the $\theta$ body coordinate.
Finally, we considered a ``circle'' gait which also initializes the paddles at a quarter turn away from the body and moves them sinusoidally with amplitude $1$, but has a quarter cycle phase offset between them.
This gait tends to move the system in a way that changes all three body coordinates throughout its execution.

We selected these three gaits because they are simple to describe and span a range of resultant body motions.
For single link paddles, the body shape space is 2D, and these gaits are represented by loci that are diagonal lines with slopes $1$, $-1$, and a circle (see \refFig{fig:r1}).
We simulated the gaits and plotted mean and variance of $\Gamma_s$, $\Gamma_p$ and $\Delta$ for each value of $\epsilon$ (\refFig{fig:r1}).
The plot shows that for all three gaits tested and for all three body coordinates, over a range spanning an order of magnitude or more around $\epsilon=1$, the perturbed Stokes models are better by $\Delta>0.05$ or more.

\subsubsection{Algorithm comparison using extremal gaits}
\label{sec:algo-compare-extremal-gaits}
Arbitrarily selected gaits such as those examined in the previous section are not expected to exhibit any special properties with respect to our modeling approach.
In particular, with respect to a goal function $\phi(\cdot)$, they are expected to be regular points of $\phi(\cdot)$. 
However, $\phi$-optimal gaits have $\nabla_p\phi = 0$ and thus have additional structure that might interact with the modeling approach.

We chose goal functionals $\int \bv{g}^x(t)\,\mathrm{d}t$ and $\int \bv{g}^y(t)\,\mathrm{d}t$ (where superscripts denote components) corresponding to displacement in the $x$ and $y$ coordinates as measured in the body frame of the paddleboat.
This is \emph{not} the same as actual $x$ or $y$ displacement in the world, since boat orientation changes over time.
Using the methods of \citet{hatton2013geometric}, we determined the extremal gaits for these goal functionals in the Stokes regime with high accuracy.
Plotted in the shape-space (and superimposed on the ``connection vector fields'' \citep{hatton2011geometric,hatton2013geometric} of the appropriate goal functional) they are diamond shaped (\refFig{fig:r2}).
We also plotted $\Gamma$ and $\Delta$, revealing that again, perturbed Stokes regressors improve performance ($\Delta>0.15$) over a range of two orders of magnitude in $\epsilon$.
Unlike the arbitrary gaits of the previous section, the extremal gaits have $\Gamma>0.1$ for all $\epsilon>1$ for both model types.
This suggests that even outside the perturbed Stokes regime the addition of regressors improves upon the zeroth order phase model.
It is also notable that in the extremal $x$ gait, $\Delta^x$ is significantly better than $\Delta^y$, whereas in the extremal $y$ gait the converse is true.

\subsubsection{Performance gains grow with shape space dimension}
\label{sec:complexity}

Thus far we have only presented results for systems having 2D shape spaces.
Because data-driven methods are often handicapped by their inability to scale with model dimensionality, we chose also to test our approach on systems of higher dimension by extending each paddle into a multi-segmented model.
We selected a gait similar to that of the symmetric flapping gait, but with the additional feature that the bending angle of a paddle was uniformly distributed through the joints it contains.
In particular, the relative angles between adjacent segments were equal and of amplitude $\pi/N$, where $N$ is the number of joints.

We plotted $\Gamma^x_p$, $\Gamma^x_s$ and $\Delta^x$ for paddles with $1$, $2$ and $3$ segments (\refFig{fig:r3}).
The $\Delta^x$ shows a marked improvement in the $4$D and $6$D models, suggesting that as shape-space complexity increased, the advantage of perturbed Stokes regressors became comparatively more significant.

\subsection{Discussion}

The results of \refSec{sec:model-accuracy-testing} show that for all versions of the swimming model and all gaits that we tested there exists a sizable window of $\epsilon$ values wherein the perturbed Stokes regressors provide models of superior quality when compared to the Stokes regressors.
In particular, the improvement is consistently present in the region $\log_{10} \epsilon \in [0,1]$, suggesting that this range of $\epsilon$ might be the range for which the predicted slow manifold is both present and sufficiently simple to be captured by the new regressors.

As noted in \refSec{sec:algo-compare-extremal-gaits}, the perturbed Stokes regressors seem to improve prediction performance more in the direction in which the gait was extremal.
We hypothesize that this is because extremal gaits have already exhausted any first-order improvements available, i.e. gradients are zero.
With the first-order terms close to zero, the presence of more high-order terms among the perturbed Stokes regressors may have a greater effect on the relative prediction error.

It is interesting to note the large magnitude of improvement in $\Delta$ as the shape space dimension increased in \refFig{fig:r3}.
Whether this is an artifact of the particular model and/or gait, or a more general feature, remains to be determined.

At the lower end $\epsilon$ magnitudes studied here, the systems are near the Stokesian limit, and therefore we expect relatively little improvement from adding regressors designed for the perturbed Stokes regime.
This is consistent with our experimental results in all figures which show for $\epsilon$ small both small values of $\Delta$ and large values of $\Gamma$ for both sets of regressors.

For very large values of $\epsilon$, the predictive quality of both algorithms is hindered by at least three factors, although only the first two can be observed here.
\begin{enumerate}
	\item The $\bo(\epsilon^2)$ term in \refThm{th:pertS-dynamics-on-slow-manifold-simpler} becomes more significant as $\epsilon$ increases.
	This issue is insurmountable if we restrict ourselves to Stokes regressors.
    If we do not, it is possible to compute correction terms which are higher order in $\epsilon$ and which can inform the selection of additional regressors for addition to our algorithm.
	It is one possible direction for future work.

	\item For $\epsilon$ sufficiently large, we expect a bifurcation in which the slow manifold (whose existence is guaranteed by \refThm{th:pertS_NAIM_persists} in \refSec{sec:pertS-reduction-perturbed-Stokes-regime}) ceases to exist.
	For such values of $\epsilon$, the hypotheses of \refThm{th:pertS-dynamics-on-slow-manifold-simpler} are not satisfied, and a reduced-order model may not exist.
	This is a mathematical expression of the physical reality of inertial effects playing a dominant role as $\epsilon$ increases, and eventually requiring momentum states to be added to the models.

	\item For sufficiently large values of $\epsilon$ the full complications of fluid-fluid interactions to come into play, and the linear viscous friction model we used becomes less and less accurate.
	We conjecture that for many systems this effect will not have significant influence until after $\epsilon$ is already sufficiently large for the slow manifold to have disappeared.
    It would be interesting to explore this issue further.
\end{enumerate}

\section{Conclusion}
We have shown that the accuracy of data-driven models motivated from geometric mechanics can be improved by using a collection of regressors derived from an asymptotic series approximation of an attracting invariant manifold in the small parameter $\epsilon$ representing the ratio of inertial to viscous forces (a Reynolds-number-like parameter).
The existence of such an invariant manifold was previously known in similar situations,\footnote{
	But see the discussion preceding  \refThm{th:pertS_NAIM_persists} in \refSec{sec:pertS-reduction-perturbed-Stokes-regime}, which details how our result differs from that of \citet{eldering2016role}.
 } as were the approximation techniques we employed, but the combination of these together for producing data-driven models of locomotion is a novel contribution.
In simulations where we tested geometrically similar motions over $6$ orders of magnitude of $\epsilon$, we obtained improvements of $5$--$65\%$ (depending on the specific system and gait) compared to previous work, suggesting that these better-informed models can indeed capture the perturbed Stokes regime more accurately.
Furthermore, the results of one of our experiments showed further improvements as the shape-space dimension of the locomoting system increased; this suggests that higher-dimensional systems might be modeled effectively using our approach.

Future work will include application of our algorithm to questions of locomotion optimality in animals, and to hardware-in-the-loop optimization of robot motions.
An additional direction for future work is the selection of regressors and regression techniques for hybrid dynamical systems, and for non-viscous dissipation models.

\appendix

	\section{Appendix A --- Derivation of the Equations of Motion}\label{app:equations-derivation}
	In this and the following section we consider systems more general than those considered earlier, and in so doing assume that the reader is familiar with some basic concepts in geometric mechanics and differential geometry: Lie groups, group actions, and principal bundles.
	We refer the reader to \citet{kobayashi1963foundationsV1,marsden1994introduction,lee2013smooth,bloch2015nonholonomic} for the relevant standard definitions related to Lie groups and group actions, and we refer the reader to \citet{kobayashi1963foundationsV1,marsden1991symmetry, marsden2009lectures, bloch2015nonholonomic} for material on bundles.

	We consider a mechanical system on a configuration space $Q$ whose Lagrangian is of the form kinetic minus potential energy.
	We will also consider this system to be subjected to external viscous forcing arising from a Rayleigh dissipation function, and also subjected to an external force exerted by the locomoting body.
	We are interested in the situation that we have a smooth action $\theta\colon G \times Q \to Q$ of a Lie group $G$ on $Q$, such that the Lagrangian, viscous forces, and external force are all symmetric under the action.
	In this case, we say that $G$ is a \concept{symmetry group}.

	In \S \ref{sec:pertS-mech-visc-conn}, we will define some geometric quantities on $Q$ which encode information about the symmetry and the dynamics.
	Working in coordinates induced by a local trivialization, in \S \ref{sec:pertS-equations-local} we derive the equations of motion in terms of these quantities.
	In \S \ref{sec:pertS-reduction-Stokes-limit}, we recall how the equations become governed by the so-called viscous connection in the Stokesian limit \citep{kelly1996geometry, eldering2016role}, which will set the stage for our derivation in \S \ref{sec:pertS-reduction-perturbed-Stokes-regime} of a corrected reduced-order model for the perturbed Stokes regime.

	\subsection{The mechanical and viscous connections}\label{sec:pertS-mech-visc-conn}
	In this section, we define the mechanical and viscous (or Stokes) connections, roughly following \citet{kelly1996geometry}.
	We consider a Lagrangian $L\colon \T Q \to \R$ which is invariant under the lifted action $\D \theta_g$ of $G$ on $\T Q$ (here $\D$ denotes the derivative or pushforward).
	We assume the Lagrangian to be of the form kinetic minus potential energy, where kinetic energy is given by $\frac{m}{2}k$, where $m > 0$ is a dimensionless mass parameter, $k$ is a smooth symmetric bilinear form, and $m k$ is the \concept{kinetic energy metric}.
	In what follows, we assume that $k$ is positive definite when restricted to tangent spaces to $G$ orbits, but \emph{not} necessarily that $k$ is positive definite on all tangent vectors.\footnote{This does not affect any of the following derivations and results.
		However, this generality is merely a convenience ensuring that our results apply to certain idealized examples, e.g., linkages with some links having zero mass (c.f. \S \ref{sec:performance}). Of course such examples are not physical and, e.g., must be supplemented with assumptions to ensure that the massless links have well-defined dynamics.}
	Denoting by $\g$ the Lie algebra of $G$ and $\g^*$ its dual, we define the (Lagrangian) \concept{momentum map} $J\colon \T Q \to \g^*$ via
	\begin{equation}\label{eq:pertS-J-def}
	\langle J(v_q), \xi \rangle = \langle \FL(v_q), \xi_Q(q) \rangle = mk_q(v_q, \xi_Q(q)),
	\end{equation}
	where $v \in \T_q Q$ and $\xi \in \g$.
	Here $\FL\colon \T Q\to \T^* Q$ is the \concept{fiber derivative} of $L$ given by $\FL(v_q)(w_q)\coloneqq \frac{\partial}{\partial s}|_{s=0} L(v_q + $s$ w_q)$,
	and the smooth vector field $\xi_Q$ on $Q$ is the \concept{infinitesimal generator} defined by $\xi_Q(q)\coloneqq \frac{\partial}{\partial s}|_{s=0}\theta_{\exp(s\xi)}(q)$.
	We define the \concept{mechanical connection} $\Gm\colon \T Q \to \g$ via $\Gm(v_q)\coloneqq \I^{-1}(q)J(v_q)$, where $\I(q)\colon \g \to \g^*$ is the \concept{locked inertia tensor} defined via
	\begin{equation}\label{eq:pertS-I-def}
	\langle \I(q) \xi, \eta \rangle \coloneqq \langle \FL(\xi_Q(q)),\eta_Q(q)\rangle = mk_q(\xi_Q(q),\eta_Q(q)),
	\end{equation}
	where $\xi,\eta \in \g$.

	We now follow an analogous procedure to define the viscous connection $\Gv\colon \T Q \to \R$.
	We consider a Rayleigh dissipation function $R \colon \T Q \to \R$ defined in terms of a $G$-invariant smooth symmetric bilinear form $\nu$ on $Q$: $R(v_q)\coloneqq \frac{c}{2}\nu_q(v_q,v_q)$, where $c>0$ is a dimensionless parameter representing the amount of damping or dissipation in the system due to viscous forces.
	As with $k$, we assume that $\nu$ is positive definite when restricted to tangent spaces to $G$ orbits, but \emph{not} necessarily that $\nu$ is positive definite on all tangent vectors.\footnote{This generality simply allows for, e.g., the situation of a linkage in which not all links are subject to viscous forces.}
	The corresponding force field $F_R\colon \T Q \to \T^* Q$ is given by minus the fiber derivative of $R$, $F_R\coloneqq \Fd(-R)$.
	We define a map $K\colon \T Q \to \g^*$, analogous to the momentum map $J$, via
	\begin{equation}\label{eq:pertS-K-def}
	\langle K(v_q), \xi \rangle = \langle F_R(v_q), \xi_Q(q) \rangle = -c\nu_q(v_q, \xi_Q(q)),
	\end{equation}
	where $v \in \T_q Q$ and $\xi \in \g$.
	We define the \concept{viscous connection} or \concept{Stokes connection} $\Gv\colon \T Q \to \g$  via $\Gv(v_q)\coloneqq \V^{-1}(q)K(v_q)$, where $\V(q)\colon \g \to \g^*$ is defined via
	\begin{equation}\label{eq:pertS-V-def}
	\langle \V(q) \xi, \eta \rangle \coloneqq \langle F_R(\xi_Q(q)),\eta_Q(q)\rangle = -c\nu_q(\xi_Q(q),\eta_Q(q)),
	\end{equation}
	where $\xi,\eta \in \g$.

	Using the $G$-invariance of $L$ and $\nu$, a calculation shows that $\Gm$ and $\Gv$ are equivariant with respect to the adjoint action of $G$ on $\g$:
	\begin{equation}\label{eq:connection_equivariance}
	\forall g \in G\colon \Gm \circ \D \theta_g = \Ad_g \circ \Gm, \quad \Gv \circ \D \theta_g = \Ad_g \circ \Gv
	\end{equation}
	Hence if the natural projection $\pi_Q\colon Q \to Q/G$ from $Q$ to the space of orbits $Q/G$ of points in $Q$ is a principal $G$-bundle, then the mechanical and viscous connections $\Gm$ and $\Gv$ are indeed principal connections; this justifies their titles.

	Now in order for our system to move itself through space, we also allow there to be a $G$-equivariant external force $F_E\colon \R \times \T Q\to \T^*Q$ exerted by the locomoting body, subject to the requirement that $F_E$ takes values in the annihilator of $\ker \D \pi_Q$, the distribution tangent to group orbits. 
This requirement reflects the physically reasonable assumption that the locomoting body can exert only ``internal forces'' which directly affect only its shape $r\in Q/G$  (c.f. \citet[Sec.~3.3]{eldering2016role} and \citet[Sec.~4.2]{bloch1996nonholonomic}).
	For future use, we now prove the following
	\begin{Prop}\label{prop:pertS-J-deriv}
		The derivative of $J$ along trajectories of the $G$-symmetric mechanical system is given by
		\begin{equation}
		\dot{J} = K,
		\end{equation}
		making the canonical identifications $\T_J \g \cong \g$.
	\end{Prop}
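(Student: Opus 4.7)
The plan is to deduce the equation $\dot J = K$ from a standard Noether-with-forcing calculation, exploiting three ingredients: (i) the $G$-invariance of the Lagrangian $L$; (ii) the defining identities for $J$ and $K$ in terms of pairings against infinitesimal generators $\xi_Q$; and (iii) the assumption that the locomoting force $F_E$ takes values in the annihilator of $\ker \D\pi_Q$, i.e.\ annihilates the vertical distribution tangent to $G$-orbits.

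First I would write down the Lagrange--d'Alembert equations for the $G$-symmetric system subject to the combined external force $F \coloneqq F_R + F_E$, and pair them with an arbitrary fixed $\xi \in \g$ via the variation $\delta q(t) \coloneqq \xi_Q(q(t))$. The $G$-invariance of $L$ implies that the variation $\frac{d}{ds}\big|_{s=0} L(\D\theta_{\exp(s\xi)} \dot q(t))$ vanishes identically, so for any trajectory and any interval $[t_0,t_1]$ the Lagrange--d'Alembert identity collapses to
\begin{equation*}
\bigl[\langle \FL(\dot q), \xi_Q(q)\rangle\bigr]_{t_0}^{t_1} \;=\; \int_{t_0}^{t_1}\langle F(\dot q), \xi_Q(q)\rangle\,dt,
\end{equation*}
where the boundary term is recognized, by the definition \eqref{eq:pertS-J-def} of $J$, as $\langle J, \xi\rangle\big|_{t_0}^{t_1}$.

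Next I would split the integrand. Since $\xi_Q(q) \in \ker \D\pi_Q$ and $F_E(t,\slot)$ annihilates $\ker \D\pi_Q$ by hypothesis, $\langle F_E, \xi_Q\rangle \equiv 0$; and by the definition \eqref{eq:pertS-K-def} of $K$, $\langle F_R(\dot q), \xi_Q(q)\rangle = \langle K(\dot q), \xi\rangle$. Hence
\begin{equation*}
\langle J, \xi\rangle\big|_{t_0}^{t_1} = \int_{t_0}^{t_1}\langle K,\xi\rangle\,dt.
\end{equation*}
Differentiating in $t_1$ (or taking $t_1 \downarrow t_0$) gives $\tfrac{d}{dt}\langle J,\xi\rangle = \langle K,\xi\rangle$ at each time; since $\xi \in \g$ is arbitrary and constant, this yields $\dot J = K$ in $\g^*$ under the canonical identification $\T_J\g^* \cong \g^*$.

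There is no real obstacle of principle here; the proof is a single application of Noether's theorem in Lagrange--d'Alembert form. The only thing requiring a small amount of care is justifying that the $s$-derivative of $L$ along the group variation really vanishes \emph{pointwise in $t$} (not merely after integration), which is immediate from $G$-invariance of $L$ together with the chain rule applied to $s \mapsto \D\theta_{\exp(s\xi)} \dot q(t)$; this is what allows the variational identity to hold without endpoint conditions and thus to be differentiated in $t_1$ to produce the pointwise ODE.
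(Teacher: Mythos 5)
Your proof is correct and follows essentially the same route as the paper's: both invoke the Lagrange--d'Alembert principle, use $G$-invariance of $L$ to kill the group-variation of the Lagrangian, and use the hypothesis that $F_E$ annihilates $\ker\D\pi_Q$ to discard the $F_E$ term, leaving exactly $\langle K,\xi\rangle$ by definition. The only difference is cosmetic --- you pass through the integral (Noether-in-variational) form and then differentiate in $t_1$, while the paper computes $\tfrac{d}{dt}\langle J,\xi\rangle$ pointwise and recombines the residual terms into the $s$-derivative of $L$ along the group flow; you also (correctly) write the identification as $\T_J\g^*\cong\g^*$, fixing a small typo in the statement.
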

	\begin{proof}
		We compute in a local trivialization on $\T Q$ induced by a chart for $Q$, so that we may write a trajectory as $(q,\dot{q})$.
		Note that in such local coordinates, $\FL(q,\dot{q})(v_q) = \frac{\partial L(q,\dot{q})}{\partial \dot{q}}v_q$.
		Hence
		\begin{equation}
		\begin{split}
		\langle \dot{J}(q,\dot{q}), \xi\rangle &= \frac{d}{dt}\left (\frac{\partial L(q(t),\dot{q}(t))}{\partial \dot{q}}\xi_Q(q(t))\right)\\
		&= \left(\frac{d}{dt}\frac{\partial L}{\partial \dot{q}}\right)\xi_Q(q) + \frac{\partial L}{\partial \dot{q}}\D\xi_Q(q)\dot{q}\\
		&= \left(\frac{\partial L}{\partial q} + F_R + F_E \right)\xi_Q(q) + \frac{\partial L}{\partial \dot{q}}\D\xi_Q(q)\dot{q},
		\end{split}
		\end{equation}
		where we obtained the last line using $\frac{d}{dt}\frac{\partial L}{\partial \dot{q}} - \frac{\partial L}{\partial q}=F_R + F_E,$ which follows from the Lagrange-d'Alembert principle \citep[p.~8]{bloch2015nonholonomic}.
		Since $F_E$ annihilates tangent vectors to group orbits, $\langle F_E, \xi_Q(q)\rangle = 0$.
		Hence rearranging and letting $\Phi_\xi^s$ denote the flow of $\xi_Q$, we find
		\begin{equation*}
		\begin{split}
		\langle \dot{J}(q,\dot{q}), \xi \rangle&= \frac{\partial}{\partial s} L\left(\Phi_\xi^s(q(t)), \D \Phi_\xi^s(q(t))\dot{q}(t)\right) + \langle F_R(q,\dot{q}), \xi_Q(q)\rangle\\
		&= \frac{\partial}{\partial s} L\left(\Phi_\xi^s(q(t)), \D \Phi_\xi^s(q(t))\dot{q}(t)\right) + \langle K(q,\dot{q}), \xi \rangle.
		\end{split}
		\end{equation*}
		The derivative term is zero due to the invariance of $L$ under the action of $G$, so from the arbitrariness of $\xi \in \g$ we obtain the desired result.
	\end{proof}
	As a corollary, we obtain a slight generalization of the classical Noether's theorem.
	\begin{Co}[Noether's theorem]\label{co:pertS-Noether}
		Consider a mechanical system given by a $G$-invariant Lagrangian of the form kinetic minus potential energy.
		Assume that the only external forces take values in the annihilator of the distribution tangent to the $G$ orbits.
		Then the derivative of the momentum map $J$ along trajectories satisfies
		\begin{equation*}
		\dot{J}=0.
		\end{equation*}
	\end{Co}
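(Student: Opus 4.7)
The plan is to deduce Corollary~\ref{co:pertS-Noether} as an immediate consequence of Proposition~\ref{prop:pertS-J-deriv}. Proposition~\ref{prop:pertS-J-deriv} already establishes that along any trajectory of the $G$-symmetric mechanical system,
\begin{equation*}
\langle \dot{J}(q,\dot{q}), \xi \rangle = \langle K(q,\dot{q}), \xi \rangle
\end{equation*}
for every $\xi \in \g$, where $K(v_q)$ is defined in \eqref{eq:pertS-K-def} by pairing the dissipative external force $F_R$ against the infinitesimal generator $\xi_Q$. The core observation for the corollary is that the hypothesis --- that \emph{every} external force takes values in the annihilator of $\ker \D \pi_Q$ --- forces this pairing to vanish identically.

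Concretely, under the stated assumption, any external force (whether arising from a Rayleigh dissipation function, a locomotion input, or any other source) satisfies $\langle F(v_q), \xi_Q(q)\rangle = 0$ for all $\xi \in \g$ and $v_q \in \T Q$, since by construction $\xi_Q(q) \in \ker \D \pi_Q(q)$. Applied to whatever force plays the role of $F_R$ in the hypotheses of Proposition~\ref{prop:pertS-J-deriv}, this shows $\langle K(v_q),\xi\rangle = 0$ for all $\xi \in \g$, hence $K \equiv 0$. Combining this with Proposition~\ref{prop:pertS-J-deriv} yields $\dot{J} = 0$, as required.

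The one subtle point worth flagging is that Proposition~\ref{prop:pertS-J-deriv} was stated with a specific Rayleigh force $F_R$ present alongside the annihilator-valued force $F_E$. For Corollary~\ref{co:pertS-Noether} one should either observe that the proof of Proposition~\ref{prop:pertS-J-deriv} goes through verbatim after relabeling $F_R + F_E$ as a single external force satisfying the annihilator property (so that its pairing with $\xi_Q$ drops out directly), or equivalently view the corollary's hypothesis as forcing $F_R$ itself to annihilate $\ker \D \pi_Q$, making $K \equiv 0$. Either reading suffices, and there is no real obstacle beyond this notational remark: the substantive work --- converting the Lagrange--d'Alembert equations into the identity $\dot{J} = K$ via the $G$-invariance of $L$ --- has already been done in Proposition~\ref{prop:pertS-J-deriv}.
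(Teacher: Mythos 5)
Your argument is correct and follows exactly the paper's approach: the paper's proof is simply ``Set $K=0$ in Proposition~\ref{prop:pertS-J-deriv},'' which is what you do after observing that the annihilator hypothesis forces $\langle K(v_q),\xi\rangle=\langle F_R(v_q),\xi_Q(q)\rangle=0$ for all $\xi\in\g$. Your closing remark about relabeling $F_R+F_E$ is a fair clarification of what ``setting $K=0$'' really means, but it does not change the substance of the argument.
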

	\begin{proof}
		Set $K = 0$ in Proposition \ref{prop:pertS-J-deriv}.
	\end{proof}

	\subsection{Local form of the equations of motion}\label{sec:pertS-equations-local}
	Assuming that the action of $G$ on $Q$ is free and proper \citep[Ch.~21]{lee2013smooth} so that $\pi_Q\colon Q\to Q/G$ is a principal $G$-bundle, we now derive the equations in a local trivialization, following \citep{kelly1996geometry}.
	In a local trivialization $U \times G$, $\pi_Q$ simply becomes projection onto the first factor and the $G$ action is given by left multiplication on the second factor.
	We define $S \coloneqq Q/G$ to be the \concept{shape space} representing all possible shapes of a locomoting body, and we write a point in the local trivialization as $(r,g)\in U\times G$ where $U\subset S$.
	We assume that $U$ is the domain of a chart for $S$, so that we have induced coordinates $(r,\dot{r})$ for $\T U$.

	Defining the \concept{body velocity}\footnote{
		As mentioned in the main text, the body velocity is often written $g^{-1}\dot{g}$ by an abuse of notation which is only defined on matrix Lie groups where the product of a tangent vector and a group element is naturally defined. %
		We use the alternative notation $\bv{g}$ as a matter of personal preference.}
	$\bv{g} \coloneqq \D\mathrm{L}_{g^{-1}}\dot g$,	the equivariance property \eqref{eq:connection_equivariance} of the connection forms $\Gm, \Gv$
	imply that they may be written in the trivialization as
	\begin{equation}\label{eq:pertS-local-conn-def}
	\begin{split}
	\Gm(r,g)\cdot(\dot{r},\dot{g}) &= \Ad_g\left(\bv{g}+\Am(r)\cdot \dot{r}\right)\\
	\Gv(r,g)\cdot(\dot{r},\dot{g}) &= \Ad_g\left(\bv{g}+\Av(r)\cdot \dot{r}\right),
	\end{split}
	\end{equation}
	where $\Am\colon \T U \to \g$ and $\Av \colon \T U \to \g$ are respectively the \concept{local mechanical connection} and \concept{local viscous connection}.
	We define a diffeomorphism $(r,\dot{r},g,\dot{g})\mapsto (r,\dot{r},g,p)$, with $p$ the \concept{body momentum} defined by
	\begin{equation}\label{eq:pertS-p-def}
	p\coloneqq \Ad_g^*{J} \in \g^*.
	\end{equation}
	Here $\Ad_g^*$ is the dual of the adjoint action $\Ad_g$ of $G$ on $\g$.
	We additionally define
	\begin{equation}\label{eq:pertS-Iloc-Vloc-def}
	\begin{split}
	\Il &\coloneqq \Ad_g^* \I \Ad_g \colon \g \to \g^*\\
	\Vl &\coloneqq \Ad_g^* \V \Ad_g \colon \g \to \g^*
	\end{split}
	\end{equation}
	to be the local forms of $\I$ and $\V$.
	We note that the invariance of the Lagrangian $L$ and Rayleigh dissipation function $R$ under $G$, together with the general identity $\D \theta_g \xi_{Q}(q) = (\Ad_g\xi)_{Q}(\theta_{g}(q))$, imply that $\Il(r),\Vl(r)$ depend on the shape variable $r$ only.

	Rearranging \eqref{eq:pertS-local-conn-def}, using the expressions \eqref{eq:pertS-p-def}, \eqref{eq:pertS-Iloc-Vloc-def}, and using Proposition \ref{prop:pertS-J-deriv}, we obtain the equations of motion
	\begin{equation}\label{eq:pertS-equations-motion-local}
	\begin{split}
	\bv{g} &= -\Am\cdot \dot{r} + \Il^{-1}p\\
	\dot{p} &= \Vl(\Av- \Am)\cdot \dot{r} + \Vl\Il^{-1} p + \ad^*_{\Il^{-1}p}p-\ad^*_{\Am\cdot \dot{r}}p,
	\end{split}
	\end{equation}
	where we have suppressed the $r$-dependence of $\Am,\Av,\Il,\Vl$ for readability.
	Notice that the $\dot{p}$ equation is completely decoupled from $g$.

	In this paper, we are interested in the effect of shape changes on body motion, and not on the generation of shape changes themselves.
	Hence we have suppressed the equations for $\dot{r},\ddot{r}$ from \eqref{eq:pertS-equations-motion-local}, simply viewing $r, \dot{r}$ as inputs in those equations, but see \citet{bloch1996nonholonomic} for more details on the specific form of the equations.
	We merely note that, if the kinetic energy metric is positive-definite, then the Lagrangian is hyperregular and our assumption of $G$-equivariance of the exerted force $F_E$ implies that
	\begin{equation}\label{eq:r-dynamics-form}
	\ddot{r} = f(t,r,\dot{r},\Il^{-1} p)
	\end{equation}
	for some function $f$ which depends on the local trivialization.
	If the kinetic energy metric is not positive-definite (for use in toy examples like those in \S \ref{sec:performance}; see the precise assumptions in \S \ref{sec:pertS-mech-visc-conn}, and the footnote there), then we \emph{assume} that $\ddot{r}$ is given by \eqref{eq:r-dynamics-form}.

	\subsection{Reduction in the Stokesian limit}\label{sec:pertS-reduction-Stokes-limit}
	From the definitions \eqref{eq:pertS-I-def}, \eqref{eq:pertS-V-def} of $\Il, \Vl$, we see that we may define $\bIl, \bVl$ by
	\begin{equation*}
	\Il(r) \eqqcolon m \bIl(r) ~~~~
	\Vl(r) \eqqcolon c \bVl(r).
	\end{equation*}
	Defining the dimensionless parameter $\epsilon\coloneqq \frac{m}{c}$ and multiplying both sides of \eqref{eq:pertS-equations-motion-local} by $\Il\Vl^{-1}$, we obtain the rewritten equations of motion
	\begin{equation}\label{eq:pertS-equations-motion-local-w-epsilon}
	\begin{split}
	\bv{g} &= -\Am\cdot \dot{r} + \frac{1}{m}\bIl^{-1}p\\
	\epsilon \bIl\bVl^{-1}\dot{p} &= m\bIl(\Av- \Am)\cdot \dot{r} + p + \epsilon \bIl\bVl^{-1} \ad^*_{\Il^{-1}p}p- \epsilon \bIl\bVl^{-1}\ad^*_{\Am\cdot \dot{r}}p.
	\end{split}
	\end{equation}
	In considering the limit in which viscous forces dominate the inertia of the locomoting body, \citet{kelly1996geometry} formally set $\epsilon =0$ in \eqref{eq:pertS-equations-motion-local-w-epsilon} to obtain $p = m\bIl(\Am-\Av)\cdot \dot{r}$ from the second equation.
	Substituting this into the first equation of \eqref{eq:pertS-equations-motion-local-w-epsilon}, they derive the following form of the equations of motion:
	\begin{equation}\label{eq:pertS-visc-kelly-limit}
	\bv{g}=-\Av \cdot \dot{r}.
	\end{equation}
	In the language of differential geometry, \eqref{eq:pertS-visc-kelly-limit} states that in the Stokesian limit trajectories are \concept{horizontal} with respect to the viscous connection.
	We will see in the next section that this reduction can be extended away from the $\epsilon \to 0$ limit.

	\section{Appendix B --- Reduction in the Perturbed Stokes Regime}\label{sec:pertS-reduction-perturbed-Stokes-regime}\label{app:reduction-pert-stokes}

	In \citet{eldering2016role}, the argument of \citet{kelly1996geometry} was explained in more detail using the theory of normally hyperbolic invariant manifolds (NHIMs) in the context of geometric singular perturbation theory \citep{fenichel1979geometric,jones1995geometric,kaper1999systems}.
	The idea is to show that for $\epsilon > 0$ sufficiently small,
	the dynamics \eqref{eq:pertS-equations-motion-local-w-epsilon} possess an exponentially attractive invariant \concept{slow manifold} $M_\epsilon$, such that the dynamics restricted to $M_\epsilon$ approach \eqref{eq:pertS-visc-kelly-limit} as $\epsilon \to 0$.
	We give an alternative argument which yields a result differing from that of \citet{eldering2016role} in two ways.
	\begin{enumerate}
		\item \citet{eldering2016role} give an argument for general mechanical systems without symmetry under the assumption that the configuration space $Q$ is compact, although they do indicate that compactness can be replaced with uniformity conditions using noncompact NHIM theory \citep{eldering2013normally}. Our argument assumes symmetry but allows $G$ to be noncompact, though we do require that $S\coloneqq Q/G$ be compact.
		This enables application of our result to locomotion systems with noncompact symmetry groups, such as the Euclidean group of planar rigid motions $\SE(2)$ as in the systems of \S \ref{sec:performance}.

		\item \citet{eldering2016role} consider the limit $m\to 0$ while holding $c$ and the force exerted by the locomoting body fixed.
		This makes sense, because if the exerted force were held fixed while taking $c \to \infty$, then trivial dynamics would result in the singular limit: the system would not move at all.
		Rather than holding the exerted force fixed, we will consider the differential equation prescribing the \emph{dynamics} of the shape variable to be fixed.\footnote{This implicitly assumes that the locomoting body is capable of exerting $\bo(c)$ forces.}
		Under this assumption, we show that the dynamics depend only on the \emph{ratio} $\epsilon = \frac{m}{c}$, and in particular the dynamics obtained in the two singular limits $m \to 0$ and $c \to \infty$ are the same.
	\end{enumerate}
	Before stating Theorem \ref{th:pertS_NAIM_persists}, we need the following definition.
	\begin{Def}[$C^k_b$ time-dependent vector fields]\label{def:ckb-vector-fields}
		Let $M$ be a compact manifold with boundary, and let $f\colon \R \times M \to \T M$ a $C^{k \geq 0}$ time-dependent vector field.
		Let $(U_i)_{i=1}^n$ be a  finite open cover of $M$ and $(V_i, \psi_i)_{i=1}^n$ be a finite atlas for $M$ such that $\bar{U}_i \subset V_i$ for all $i$, and for each $i$ define $f_i \coloneqq (\D \psi_i \circ f\circ (\id_\R \times \psi_i^{-1}))$.
		We define an associated $C^k$ norm $\norm{f}_k$ of $f$ via
		\begin{equation}
		\norm{f}_{ k}\coloneqq \max_{1\leq i\leq n}\max_{\substack{0 \leq j \leq k\\x\in \psi_i(\bar{U}_i)}}\norm{\D^j f_i(x)},
		\end{equation}
		where $\norm{\D^j f_i(x)}$ denotes the norm of a $j$-linear map; here $\D^j f$ includes partial derivatives with respect to time as well as the spatial variables.
		If $\norm{f}_k< \infty$, we say that $f$ is $C^k$-bounded and write $f\in C^k_b$.
		The norm $\norm{\cdot}_k$ makes the $C^k_b$ time-dependent vector fields into a Banach space.
		The norms induced by any two such finite covers of $M$ are equivalent, and thereby induce a canonical \concept{$C^k_b$ topology} on the space of $C^k_b$ time-dependent vector fields.
	\end{Def}
	\begin{Rem}
		Definition
		\ref{def:ckb-vector-fields} defines the $C^k_b$ topology on the space of $C^k_b$ time-dependent vector fields on a compact manifold.
		As discussed in \citet[Sec.~1.7]{eldering2013normally},
		this $C^k_b$ topology is finer than the $C^k$ weak Whitney topology and coarser than the $C^k$ strong Whitney topology \citep[Ch.~2]{hirsch1976differential}, but all of these topologies induce the same topology on the subspace of time-independent vector fields due to compactness.
		Definition \ref{def:ckb-vector-fields} is a special case of the definition in \citet[Ch.~2]{eldering2013normally} for the $C^k_b$ topology on $C^k_b$ vector fields on Riemannian manifolds of bounded geometry, and on $C^k_b$ maps between such manifolds.
	\end{Rem}

	The following theorem concerns a $G$-symmetric dynamical system on $\T Q$ whose equations of motion are consistent with our assumptions so far: i.e., they are given in local trivializations by \eqref{eq:pertS-equations-motion-local-w-epsilon} and an equation of the form \eqref{eq:r-dynamics-form}.

	\begin{Th}\label{th:pertS_NAIM_persists}
		Assume that $S=Q/G$ is compact.
		Let $2 \leq k < \infty$, and let $X^\epsilon$ be a $C^k$ family of $G$-symmetric time-dependent vector fields on $\T Q$ with the following properties:

		\begin{enumerate}
			\item For every compact neighborhood with $C^k$ boundary $K_0 \subset \T Q$ and $\epsilon > 0$, $X^\epsilon|_{\R \times K_0}\in C^k_b$ (Definition \ref{def:ckb-vector-fields}).

			\item There exists a compact connected neighborhood $K\subset \T S$ of the zero section of $\T S$ with $C^k$ boundary, such that $N\coloneqq \D \pi_Q^{-1}(K) \subset \T Q$  is positively invariant for $X^\epsilon$, for all sufficiently small $\epsilon > 0$.

			\item $X^\epsilon$ is given in each local trivialization $\T(U\times G)$, where $U$ is a chart for $S$, by \eqref{eq:r-dynamics-form} and \eqref{eq:pertS-equations-motion-local-w-epsilon}:
			\begin{equation}\label{eq:reduced-system}
			\begin{split}
			\ddot{r} &= f\left(t,r,\dot{r},\frac{1}{m}\bIl^{-1}p\right)\\
			\epsilon \bIl\bVl^{-1}\dot{p} &= m\bIl(\Av- \Am)\cdot \dot{r} + p + \epsilon \bIl\bVl^{-1} \ad^*_{\Il^{-1}p}p- \epsilon \bIl\bVl^{-1}\ad^*_{\Am\cdot \dot{r}}p\\
			\bv{g} &= -\Am\cdot \dot{r} + \frac{1}{m}\bIl^{-1}p
			\end{split}
			\end{equation}
			for some function $f$ which depends on the local trivialization but is independent of $\epsilon$.
		\end{enumerate}
		Then for all sufficiently small $\epsilon > 0$, there exists a $C^k$ noncompact normally hyperbolic invariant manifold with boundary $M_\epsilon\subset \R \times N \subset \R \times \T Q$ for the extended dynamics given by the extended vector field $(1,X_\epsilon)$ on $\R \times \T Q$.
		Additionally, $M_\epsilon$ is uniformly (in time and space) globally asymptotically stable and uniformly locally exponentially stable (with respect to the distance induced by any complete $G$-invariant Riemannian metric on $\T Q$) for the extended dynamics restricted to $\R \times N$.
		Finally, there exists $\epsilon_0 > 0$ such that, for each local trivialization $U\times G$, there exists a $C^k$ map $h_\epsilon\colon \R \times (\T U \cap K) \times (0,\epsilon_0)\to \g^*$
		such that $M_\epsilon \cap \D \pi_Q^{-1}(\T U \cap K)$ corresponds to
		\begin{equation}\label{eq:pertS-Meps-graph-lagrangian}
		\{(t,r,\dot{r},p,g): p =  h_\epsilon(t,r,\dot{r},\epsilon)\},
		\end{equation}
		\begin{equation*}
		h_\epsilon(t,r,\dot{r},\epsilon) = \Il\left[(\Am(r)-\Av(r))\cdot \dot{r} + \bo(\epsilon)\right]
		\end{equation*}
		(with $p$ defined by \eqref{eq:pertS-p-def}), and $h_\epsilon$ together with its partial derivatives of order $k$ or less are bounded uniformly in time.
		If $f(t,r,\dot{r},\Il^{-1} p)$ is independent of $t$, then $h_\epsilon$ and $M_\epsilon$ are independent of $t$, and $M_\epsilon$ can be interpreted as a compact NHIM for the (non-extended) dynamics restricted to $N$.
	\end{Th}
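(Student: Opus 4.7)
The strategy is standard geometric singular perturbation theory adapted to the $G$-equivariant noncompact setting, using the noncompact NHIM persistence theorem of \citet{eldering2013normally}. The first step is to reduce to a problem with compact base via the $G$-symmetry: although $\T Q$ is noncompact (because $G$ is), the subsystem \eqref{eq:reduced-system} governing $(r,\dot r,p)$ is $g$-independent, so it descends to the quotient $(\T Q)/G$, a vector bundle over the compact shape space $S$ with typical fiber $\T_r S \oplus \g^*$. Positive invariance of $N = \D\pi_Q^{-1}(K)$ confines $(r,\dot r)$ to the compact set $K \subset \T S$. Working in local trivializations and using $G$-equivariance to patch, the slow manifold is constructed downstairs and then lifted back to a $G$-equivariant submanifold of $\T Q$.

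Next I would identify the critical manifold and verify normal hyperbolicity. Setting $\epsilon=0$ in the $p$-equation of \eqref{eq:reduced-system} gives the critical relation $p = p_*(r,\dot r) \coloneqq \Il(r)\bigl(\Am(r) - \Av(r)\bigr)\cdot \dot r$, whose graph $M_0$ over $K$ is diffeomorphic to $K$. Rewriting the $p$-equation in slow--fast form,
\begin{equation*}
\dot p = \tfrac{1}{\epsilon}\bVl\bIl^{-1}\bigl[p - p_*(r,\dot r)\bigr] + \bo(1),
\end{equation*}
the key linear-algebra observation is that $\bVl\bIl^{-1}\colon \g^* \to \g^*$ is self-adjoint with respect to the inner product induced on $\g^*$ by $\bIl^{-1}$, with associated quadratic form $\alpha \mapsto \bVl(\bIl^{-1}\alpha, \bIl^{-1}\alpha)$; since $\bVl$ is negative-definite on orbit tangents (from positive-definiteness of $\nu$ there), $\bVl\bIl^{-1}$ has all negative eigenvalues with a uniform spectral gap over the compact $S$. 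Hence $M_0$ is attracting normally hyperbolic with fast contraction rate $\bo(1/\epsilon)$.

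I would then apply Eldering's noncompact NHIM persistence theorem to the time-extended system $(1,X^\epsilon)$ on $\R \times \T Q$. The $C^k_b$-boundedness of $X^\epsilon$ combined with $G$-equivariance supplies the bounded-geometry conditions Eldering requires, yielding a $C^k$ invariant manifold $M_\epsilon$ close to $\R \times M_0$. Since $M_0$ is a graph and graph representations are $C^1$-open, $M_\epsilon$ is the graph of some $h_\epsilon$ on each local trivialization; the asymptotic form $h_\epsilon = \Il[(\Am-\Av)\cdot \dot r + \bo(\epsilon)]$ is obtained by the formal power-series substitution $h_\epsilon = h_0 + \epsilon h_1 + \cdots$ into the invariance equation and matching powers of $\epsilon$, with the $\bo(1)$ term giving $h_0 = p_*$. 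Global asymptotic stability on $\R \times N$ follows from a Lyapunov argument using $V(p) = \tfrac{1}{2}\langle p - h_\epsilon, \bIl^{-1}(p - h_\epsilon)\rangle$ together with the uniform spectral gap; time-independence of $h_\epsilon$ in the autonomous case follows from uniqueness of the NHIM within the class of nearby invariant graphs.

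The \textbf{main obstacle} is this noncompact-NHIM step: since $G$ is noncompact in the motivating application ($\SE(2)$), classical Fenichel persistence does not apply. The combination of $G$-equivariance (translating local estimates along orbits, giving uniformity in the group direction) and $C^k_b$-boundedness (uniformity in the remaining fiber directions) should imply the bounded-geometry hypotheses of Eldering's theorem, but verifying these uniformity conditions rigorously -- and ensuring the resulting $M_\epsilon$ is genuinely $G$-equivariant via uniqueness -- is the most delicate chore in the argument.
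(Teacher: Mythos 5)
Your overall strategy is the same as the paper's: descend to the quotient $(\T Q)/G$ (compact base by compactness of $S$), identify the critical manifold as the graph over $K$, verify normal hyperbolicity via a definiteness argument on $\bIl, \bVl$, apply Eldering's noncompact NHIM persistence to the time-extended system, and lift back by $G$-equivariance. Your linear-algebra observation that $\bVl\bIl^{-1}$ is self-adjoint and negative-definite with respect to the $\bIl^{-1}$-inner product is correct and is an acceptable substitute for the paper's similarity argument via the matrix square root of $\bIl$.

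However, there is a genuine gap in your slow--fast decomposition. You write the fast equation directly in $p$:
\begin{equation*}
\dot p = \tfrac{1}{\epsilon}\bVl\bIl^{-1}\bigl[p - p_*(r,\dot r)\bigr] + \bo(1),
\end{equation*}
but that nominal $\bo(1)$ remainder is $\ad^*_{\Il^{-1}p}p - \ad^*_{\Am\cdot\dot r}p$, and $\Il^{-1}p = \tfrac{1}{m}\bIl^{-1}p$ contains an explicit $1/m$. The theorem is stated in terms of the ratio $\epsilon = m/c$ only, and the paper makes a point (in its list of differences from \citet{eldering2016role}) of covering both limits $m\to 0$ (with $c$ fixed) and $c\to\infty$ (with $m$ fixed). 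When $m$ alone is small, your remainder is not uniformly $\bo(1)$, your vector field is not uniformly $C^k_b$-close to the $\epsilon=0$ field, and the NHIM persistence hypotheses are not verified. Moreover, the graph function $p_*$ itself is $\bo(m)$, so the graph degenerates. The paper resolves exactly this by changing variables to the body locked angular velocity $\Omega \coloneqq \Il^{-1}p = \bv{g} + \Am\cdot\dot r$, after which the slow--fast system \eqref{eq:pertS-equations-motion-local-lagrangian} involves only $\epsilon$ (never $m$ or $c$ separately), the critical graph $\Omega = (\Am-\Av)\cdot\dot r$ is $\bo(1)$, and $C^k_b$-closeness is clean. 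Your Lyapunov quadratic $V(p)$ and your verification of global attraction inherit the same $m$-dependence, so they would also need to be recast in $\Omega$; the paper additionally constructs an explicit absorbing set $\{\norm{\Omega}\le k_0\}$ before running the Gr\"onwall/Hadamard estimate, a step your sketch elides but which is needed to upgrade local attraction of the NHIM to global asymptotic stability on $\R\times N$.
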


	\begin{Rem}
		Note that even if we assume $f\in C^\infty$, we can generally only obtain $C^k$ NHIMs $M_\epsilon$ for $k$ finite.
		This is because we obtain $M_\epsilon$ as a perturbation of a NHIM $M_0$, and perturbations of $C^\infty$ NHIMs are generally only finitely smooth because the maximum perturbation size $\epsilon$ required to obtain degree of smoothness $k$ for $M_\epsilon$ generally depends on $k$ in such a way that $\epsilon\to 0$ as $k\to \infty$.
		See \citet[Rem.~1.12]{eldering2013normally} and \citet{van1979center} for more discussion.
	\end{Rem}

	\begin{Rem}
		By replacing compactness of $Q/G$ with uniformity conditions, it should be possible to generalize Theorem \ref{th:pertS_NAIM_persists} to the situation of $Q$ noncompact where either $Q/G$ is noncompact, or where there is no symmetry at all.
		This was pointed out in \citet[App.~1]{eldering2016role}.
		This observation seems important for the consideration of dissipative mechanical systems which are only \emph{approximately} symmetric under a group $G$, which seems to be a more realistic assumption.
	\end{Rem}
	\begin{Rem}\label{rem:recover-kelly-eqn}
		By taking $\epsilon \to 0$ in Theorem \ref{th:pertS_NAIM_persists}, we find that $p = \Il(\Am-\Av)\cdot \dot{r}$ in the limit.
		Substituting this into the first equation of \eqref{eq:pertS-equations-motion-local-lagrangian}, we obtain Equation \eqref{eq:pertS-equations-motion-local} as in \citet{kelly1996geometry}.
	\end{Rem}

	\begin{proof}~\\
		\textit{Preparation of the equations of motion}. Throughout the proof, we consider the dynamics in local trivializations of the form $U\times G$ for $Q$, where $U$ is the domain of a chart for $S$, so that we have induced coordinates $(r,\dot{r})$ for $\T U$.
		In such a local trivialization we would like to use \eqref{eq:reduced-system} to analyze the dynamics, but there are two (related) problems with this.
		First, the definition of $p$ depends on $m$, and this will cause difficulties in verifying Definition \ref{def:ckb-vector-fields} to check that certain vector fields are close in the $C^k_b$ topology.
		Second, we would like to analyze \eqref{eq:reduced-system} in a singular perturbation framework, but this is difficult to do directly because $m$ explicitly appears, and the size of $m$ may or may not be commensurate with the size of $\epsilon$.
		To remedy this situation, we change variables via the diffeomorphism $(r,\dot{r},p,g)\mapsto (r,\dot{r},\Omega,g)$ of $\T U \times \g^* \times G \to \T U \times \g\times G$ where $\Omega\in \g$ is defined by
		\begin{equation}\label{eq:pertS-Omega-def}
		\Omega\coloneqq \Il^{-1}p = \Ad_{g^{-1}}\Gm(\dot{g},\dot{r})=\bv{g} + \Am\cdot\dot{r}.
		\end{equation}
		Sometimes $\Omega$ is referred to as the \concept{(body) locked angular velocity} \citep[p.~61]{bloch1996nonholonomic}.
		Differentiating $\Il \Omega = p$, using \eqref{eq:reduced-system}, and rearranging yields
		\begin{equation}\label{eq:pertS-equations-motion-local-lagrangian}
		\begin{split}
		\dot{t} &= 1\\
		\dot{r} &= v\\
		\dot{v} &= f(t,r,v,\Omega)\\
		\epsilon \dot{\Omega} &= -\epsilon\bIl^{-1} \left(\frac{d}{dt}\bIl\right)\Omega + \bIl^{-1}\bVl(\Av- \Am)\cdot v + \bIl^{-1}\bVl\Omega + \epsilon  \bIl^{-1}\ad^*_{\bv{g}}\bIl \Omega,
		\end{split}
		\end{equation}
		where we have introduced the variable $v\coloneqq \dot{r}$.
		We have written $\ad^*_{\bv{g}}$ for space reasons, but note that the $\dot{\Omega}$ equation is independent of $g$ since
		\begin{equation}\label{eq:bv-in-terms-omega}
		\bv{g} = -\Am \cdot \dot r + \Omega,
		\end{equation}
		and this implies that $\ad^*_{\bv{g}} = \ad^*_{\Omega}-\ad^*_{\Am\cdot \dot{r}}$.
		We see that \eqref{eq:pertS-equations-motion-local-lagrangian} is split into slow $(t,r,v)$ and fast $(\Omega)$ variables, which is the appropriate setup for a singular perturbation analysis.
		The remainder of the proof consists of two parts: (i) proving that the NHIM $M_\epsilon$ exists, and (ii) establishing the stability properties of $M_\epsilon$.

		\textit{Proof that $M_\epsilon$ exists}. Introducing the ``fast time'' $\tau \coloneqq \frac{1}{\epsilon} t$ and denoting a derivative with respect to $\tau$ by a prime, after the time-rescaling we obtain the regularized equations
		\begin{equation}\label{eq:pertS-equations-motion-local-lagrangian-fast-time}
		\begin{split}
		t' &= \epsilon\\
		r' &= \epsilon v\\
		v' &= \epsilon f(t,r,v,\Omega)\\
		\Omega' &= -\epsilon\bIl^{-1} \left(\frac{d}{dt}\bIl\right)\Omega + \bIl^{-1}\bVl(\Av- \Am)\cdot v + \bIl^{-1}\bVl\Omega + \epsilon  \bIl^{-1}\ad^*_{\bv{g}}\bIl \Omega.
		\end{split}
		\end{equation}
		This rescaling of time is equivalent to replacing the vector field $(1,X_\epsilon)$ on $\R \times \T Q$ by $(\epsilon,\epsilon X_\epsilon)$.
		We see from \eqref{eq:bv-in-terms-omega} and \eqref{eq:pertS-equations-motion-local-lagrangian-fast-time} that there is a well-defined $C^k$ time-dependent vector field $\tilde{X}_0$ given by the pointwise limit $\tilde{X}_0\coloneqq \lim_{\epsilon\to 0} \epsilon X_\epsilon$.
		Given any $G$-symmetric time-dependent vector field $Y$ on $\T Q$, we let $Y/G$ denote the corresponding reduced vector field on $(\T Q)/G$.
		Hence \eqref{eq:pertS-equations-motion-local-lagrangian-fast-time} shows that the extended vector field $(1,\tilde{X}_0/G)$ has a smooth embedded submanifold $(M_0/G)$ of critical points whose intersection with a locally trivializable neighborhood is given by
		\begin{equation}\label{eq:M_0/G-graph}
		\{(r,v,\Omega) \in \T U \times \g: \Omega = (\Am-\Av)\cdot v\},
		\end{equation} and it is readily seen that $M_0/G$ is described globally as the quotient of the Ehresmann connection $M_0 \coloneqq \ker \Gv$ by the lifted action of $G$ on $\T Q$.

		Furthermore, $M_0/G$ is a globally exponentially stable NHIM for the $\epsilon = 0$ system.
		To see this, first note that in any local trivialization $t, r, v$ are constants when $\epsilon = 0$, and hence $\Omega '$ is of the form $\Omega ' = \bIl^{-1} \bVl \Omega + b$ for a constant $b$, and therefore has a globally exponentially stable equilibrium provided that all eigenvalues of $\bIl^{-1}\bVl$ have negative real part. To see that this is the case, fix a basis of $\g$ and corresponding dual basis for $\g^*$, and first consider the product $\I^{-1} \V$. With respect to our chosen basis, $\I, \V$ and their inverses $\I^{-1}, \V^{-1}$ are respectively represented by $r$-dependent matrices $I_{ij}, V_{ij}$ and their inverses $I^{ij}, V^{ij}$.
		It is immediate from the definitions \eqref{eq:pertS-I-def} and \eqref{eq:pertS-V-def} that $I_{ij}$ and $V_{ij}$ are respectively positive definite and negative definite symmetric matrices (this is why we required the bilinear forms $k, \nu$ to be positive definite when restricted to vectors tangent to $G$ orbits).
		Since $I_{ij}$ is symmetric positive definite, we may let $(\sqrt{I})_{ij}$ be a matrix square root of $I_{ij}$ and let $(\sqrt{I})^{ij}$ be its inverse.
		But then the product $I^{ik}V_{kj}$ is similar to the symmetric negative definite matrix $(\sqrt{I})^{ik}V_{k\ell}(\sqrt{I})^{\ell j}$ (Einstein summation implied).
		Hence $\I^{-1} \V$ has only eigenvalues with negative real part, and the same is true of $\Il^{-1}\Vl$ because of the similarity $\Il^{-1}\Vl = \Ad_g^{-1} \I^{-1}\V \Ad_g$.

		Let $\tilde{\pi}\colon (\T Q)/G \to \T S$ denote the projection induced by $\D \pi_Q$.
		Equation \eqref{eq:M_0/G-graph} implies that $M_0/G$ is the image of a section $\sigma_0\colon \T S \to (\T Q)/G$ of $\tilde{\pi}$.
		Hence $(M_0/G)\cap \tilde{\pi}^{-1}(K) = \sigma_0(K)$ is compact, and $M_0/G$ intersects $\tilde{\pi}^{-1}(\partial K)$ transversely.
		Furthermore, the assumption that $X^\epsilon|_{\R \times K_0} \in C^k_b$ for any compact neighborhood with $C^k$ boundary $K_0 \subset \T Q$ implies that all partial derivatives of $f$ are bounded on compact sets uniformly in time.
		This makes it clear that for any compact $K_1\subset (\T Q)/G$,  $(\epsilon X_\epsilon/G)|_{\R \times K_1}$ can be made arbitrarily close to $(\tilde{X}_0/G)|_{\R \times K_1}$ in the $C^k_b$ topology (Definition \ref{def:ckb-vector-fields}) by taking $\epsilon > 0$ sufficiently small.
		Hence by the noncompact NHIM results of \citet[Sec. 4.1-4.2]{eldering2013normally}, it follows that $(M_0/G)\cap \tilde{\pi}^{-1}(K)$ persists in extended state space $\R \times N$ to a nearby attracting NHIM $M_\epsilon/G$ with boundary for $(\epsilon, \epsilon X_\epsilon/G)$.\footnote{$M_\epsilon/G$ is unique up to the choice of a cutoff function used to modify the dynamics near the boundary of a slightly enlarged neighborhood of $\tilde{\pi}^{-1}(K)$, used in order to render a slightly enlarged version of $(M_0/G)\cap \tilde{\pi}^{-1}(K)$ overflowing invariant \citep[Sec. 4.3]{eldering2013normally}. See \citet[Sec. 5]{kvalheim2018global} and \citet[Sec. 2]{josic2000synchronization} for more details on such boundary modifications.}
		Furthermore, $M_\epsilon /G$ is the image of a section $\sigma_\epsilon\colon \R \times K \to (\T Q)/G$ of $\tilde{\pi}$, and is given in each local trivialization of $(\T Q)/G$ by the graph of a function $\Omega = \tilde{h}_\epsilon(t,r,\dot{r},\epsilon)$ which is $C^k$ bounded uniformly in time.
		By symmetry, the preimage $M_\epsilon = \pi_{\T Q}^{-1}(M_\epsilon/G)$ of  $M_\epsilon/G$  via the quotient $\pi_{\T Q}\colon \T Q \to (\T Q)/G$ yields a NHIM $M_\epsilon$ for $(\epsilon, \epsilon X_\epsilon)$ (and hence also for $(1,X_\epsilon)$) on the subset $\R \times N$ of $\R \times \T Q$, and $M_\epsilon$ is given in each local trivialization by the graph of the same function $\Omega = \tilde{h}_\epsilon$ as $M_\epsilon/G$ but augmented with trivial dependence on $g$.
		The function $h_\epsilon$ from the theorem statement is given by $h_\epsilon = \Il \tilde{h}_\epsilon$.

		\textit{Proof of the stability properties of $M_\epsilon$}.
		Fix any complete $G$-invariant Riemannian metric on\footnote{For example, take the Sasaki metric on $\T Q$ induced by any complete $G$-invariant metric on $Q$.} $\T Q$, so that it descends to a metric on $(\T Q)/G$ making $\pi_{\T Q}\colon\T Q \to (\T Q)/G$ into a Riemannian submersion \citep[p.~185]{docarmo1992riemannian}.
		We have distance functions $\tilde{d}$ and $d$ on $\T Q$ and $(\T Q)/G$ induced by these metrics.
		For $t \in \R$, we let $M_\epsilon(t)\coloneqq M_\epsilon \cap (\{t\}\times N)$ and $M_\epsilon(t)/G\coloneqq \pi_{\T Q}(M_\epsilon(t))$.
		Given $w\in \T Q$ and its orbit $\pi_{\T Q}(w) \in (\T Q)/G$, it follows that for all $t\in \R$, $\tilde{d}(w, M_\epsilon(t)) = d(\pi_{\T Q}(w), M_\epsilon(t)/G)$.\footnote{To prove this, first note that $d(\pi_{\T Q}(w), M_\epsilon(t)/G)\leq \tilde{d}(w, M_\epsilon(t))$ because the length $\ell(\tilde{\gamma})$ of any curve $\tilde{\gamma}\colon[0,1]\to \T Q$ satisfies $\ell(\pi_{\T Q}\circ \tilde{\gamma})\leq \ell(\tilde{\gamma})$. But if $\gamma:[0,1]\to (\T Q)/G$ is any curve joining $\pi_{\T Q}(w)$ to $M_\epsilon/G$, then its horizontal lift $\tilde{\gamma}$ is a curve joining $w$ to $M_\epsilon$ such that $\ell(\tilde{\gamma})=\ell(\gamma)$. Taking the infimum over all such $\gamma$ shows that $\tilde{d}(w, M_\epsilon(t)) = d(\pi_{\T Q}(w), M_\epsilon(t)/G)$.}
		Hence it suffices to prove that $M_\epsilon/G$ is uniformly globally asymptotically stable and locally exponentially stable for the vector field $(1, X_\epsilon/G)$ on $\R \times \tilde{\pi}^{-1}(K) = \R \times \pi_{\T Q}(N)$, and to do this it suffices to prove the same for $(\epsilon, \epsilon X_\epsilon/G)$.

		Fixing an inner product $\langle \slot, \slot \rangle$ and associated norm $\norm{\slot}$ on $\g$, we accomplish this in two steps.
		First, we show that there exists a compact neighborhood $K_0 \subset \pi_{\T Q}(N)$ of $M_\epsilon/G$ such that $K_0$ is positively invariant for the time-dependent flow of $X_\epsilon$, and such that any other compact neighborhood $K_1\subset \pi_{\T Q}(N)$ of $M_\epsilon/G$ flows into $K_0$ after some finite time depending on $K_1$ but independent of the initial time.
		Second, we show that all trajectories in $K_0$ converge to $M_\epsilon/G$ at a uniform exponential rate.
		To achieve this second step, we show that in the intersection of each local trivialization with $K_0$, $\norm{\Omega-\tilde{h}_\epsilon(t,r,v)}$ decreases at an exponential rate.
		Since $(\T Q)/G$ is covered by finitely many local trivialization (by compactness of $S$), and since all Riemannian metrics are uniformly equivalent on compact sets\footnote{Let $\norm{\slot}, \norm{\slot}'$ denote the Finslers (norms) induced by two Riemannian metrics, and $K_0$ our compact set. Since all norms are equivalent on finite-dimensional vector spaces, we have that the restrictions of these norms to the tangent space of a single point $x$ satisfy $\frac{1}{c(x)}\norm{\slot} \leq \norm{\slot}' \leq c(x) \norm{\slot}$. Defining $\bar{c}\coloneqq \sup_{x\in K_0}c(x)$, we obtain the uniform equivalence $\frac{1}{\bar{c}}\norm{\slot} \leq \norm{\slot}' \leq \bar{c} \norm{\slot}$ on all of $K_0$. If $K_0$ is a connected submanifold and we give it the restricted metrics, then by considering the lengths of curves in $K_0$ this implies the uniform bound $\frac{1}{\bar{c}}d \leq d' \leq \bar{c}d$ on the Riemannian distances between points in $K_0$ with respect to the restricted metrics.}, this will establish uniform exponential convergence of points in $K_0$ with respect to the distance induced by any Riemannian metric, and in particular the distance $d$.

		Consider a local trivialization $U\times G$ of $Q$ and the associated form \eqref{eq:pertS-equations-motion-local-lagrangian-fast-time} of the dynamics restricted to $\tilde{\pi}^{-1}(K \cap \T U)$.
		Differentiating $\norm{\Omega}^2$ using the last equation of \eqref{eq:pertS-equations-motion-local-lagrangian-fast-time}, it is easy to check that $\frac{d}{d\tau}\norm{\Omega}^2 \to -\infty$ as $\norm{\Omega}^2 \to \infty$, uniformly in $(t,r,v,\epsilon)$ for $\epsilon$ sufficiently small.
		(This follows from the negative definiteness of $\Il^{-1}\Vl$ and the compactness of $K$.)
		Hence we see that there exists $k_0 > 0$ such that for all $\epsilon$ sufficiently small, $\frac{d}{d\tau}\norm{\Omega}^2 \leq -1$ when $\norm{\Omega}^2 \geq k_0^2$.
		Now $k_0$ might depend on the local trivialization, but we can replace $k_0$ with the largest such constant selected from finitely many fixed local trivializations covering $Q$.
		Hence there exists a compact subset $K_0 \subset \pi_{\T Q}(N)$ given by $\{\norm{\Omega} \leq k_0\}$ in each of these fixed local trivializations, such that $K_0$ is positively invariant for the time-dependent flow of $X_\epsilon$ and such that any other compact neighborhood $K_1 \subset \pi_{\T Q}(N)$ of $M_\epsilon/G$ flows into $K_0$ after some finite time independent of the initial time.

		It remains only to establish the uniform exponential rate of convergence of trajectories in $K_0$ to $M_\epsilon$.
		For each local trivialization $U \times G$ of $Q$, we define the translated variable $\tilde{\Omega}\coloneqq \Omega - \tilde{h}_\epsilon(t,r,v,\epsilon)$.
		Since $M_\epsilon/G$ is invariant, we must have $\tilde{\Omega}' = 0$ whenever $\tilde{\Omega} = 0$.
		Differentiating $\tilde{\Omega}$ using \eqref{eq:pertS-equations-motion-local-lagrangian-fast-time}, we therefore find that
		\begin{equation}\label{eq:omega-tilde-equation}
		\begin{split}
		\tilde{\Omega}' &=  \left[-\epsilon\bIl^{-1} \left(\frac{d}{dt}\bIl\right) + \epsilon  \bIl^{-1}\ad^*_{\bv{g}}\bIl + \epsilon \zeta(t,r,v,\tilde{\Omega}) + \bIl^{-1}\bVl \right]\tilde{\Omega}\\
		&\eqqcolon \left[\epsilon A(t,r,v,\tilde{\Omega}) + \bIl^{-1}\bVl(r) \right]\tilde{\Omega},
		\end{split}
		\end{equation}
		since all of the terms which do not vanish when $\tilde{\Omega} = 0$ must cancel.
		Here $\zeta$ is defined via Hadamard's lemma \citep[Lemma~2.8]{nestruev2003smooth}:
		\begin{equation}
		\zeta(t,r,v,\tilde{\Omega}) \coloneqq \frac{\partial}{\partial v}\tilde{h}_\epsilon(t,r,v) \int_{0}^{1}\frac{\partial}{\partial \Omega}f(t,r,v,\tilde{h}_\epsilon(t,r,v) + s \tilde{\Omega})\,ds,
		\end{equation}
		so that $\zeta(t,r,v,\tilde{\Omega})\tilde{\Omega} = \tilde{h}_\epsilon(t,r,v) f(t,r,v,\tilde{h}_\epsilon + \tilde \Omega)$.
		As previously mentioned, the $C^k$ boundedness of $X_\epsilon$ on compact subsets of $\T Q$ implies that $\tilde{h}_\epsilon$, $f$, and their first $k$ partial derivatives are uniformly bounded on sets of the form $\R \times K_2$ with $K_2$ compact.
		Hence whenever $\norm{\Omega}\leq k_0$ and $(r,v) \in U \cap K$, $\norm{A(t,r,v,\tilde{\Omega})} \leq L$ for some constant $L$ depending on the local trivialization; we replace $L$ with the largest such constant chosen from finitely many local trivializations covering $Q$.
		Integrating both sides of
		\eqref{eq:omega-tilde-equation}, taking norms using the triangle inequality, and applying Gr\"{o}nwall's Lemma therefore yields
		\begin{equation}\label{eq:gas-gronwall-estimate}
		\begin{split}
		\norm{\tilde{\Omega}(\tau)} &\leq e^{-\lambda (\tau-\tau_0)} e^{\int_{\tau_0}^{\tau}\epsilon\norm{A(t(s),r(s),v(s),\tilde{\Omega}(s)}\,ds }\norm{\tilde{\Omega}(\tau_0)}\\
		& \leq e^{\left[-\lambda + \epsilon L \right](\tau-\tau_0)} \norm{\tilde{\Omega}(\tau_0)}.
		\end{split}
		\end{equation}
		where $-\lambda < 0$ is defined via $-\lambda\coloneqq \sup_{r \in S} \max\,  \textnormal{spec}(\bIl^{-1}\bVl(r))$, and is strictly negative since $S$ is compact.
		By the previous discussion, requiring  $\epsilon > 0$ to be sufficiently small so that $-\lambda + \epsilon L < 0$ completes the proof.
	\end{proof}

	Theorem \ref{th:pertS_NAIM_persists} and Remark \ref{rem:recover-kelly-eqn} show that, to zeroth order in $\epsilon$, the dynamics restricted to the slow manifold $M_\epsilon$ are given by the viscous connection model \eqref{eq:pertS-visc-kelly-limit}.
	The following theorem shows that the dynamics restricted to $M_\epsilon$ can be explicitly computed to higher order in $\epsilon$.
	We compute the restricted dynamics to first order in $\epsilon$.
	Higher order terms in $\epsilon$ can also be computed recursively, but we choose not to pursue this here.
	\begin{Th}\label{th:pertS-dynamics-on-slow-manifold}
		Assume the same hypotheses as in Theorem \ref{th:pertS_NAIM_persists}.
		Then the dynamics restricted to the slow manifold $M_\epsilon$ are given in a local trivialization by
		\begin{equation}\label{eq:pertS-corrected-eqn-first-order}
		\bv{g}= -\Av\cdot \dot{r} + \epsilon \bVl^{-1} \left(\left(\frac{\partial}{\partial_r} \bar{h}_0\right)\dot{r} + \left(\frac{\partial}{\partial\dot{r}}\bar{h}_0\right) \ddot{r} - \ad^*_{\bv{g}}(\bar{h}_0)\right) + \bo(\epsilon^2),
		\end{equation}
		where
		\begin{equation*}
		\bar{h}_0(r,\dot{r}) \coloneqq \frac{1}{m}h_0(r,\dot{r}) =  \bIl (\Am(r)-\Av(r))\cdot \dot{r},
		\end{equation*}
		where we are using the definition $\bIl \coloneqq \frac{1}{m}\Il$.
		Alternatively, we may write
		\begin{equation}\label{eq:pertS-corrected-eqn-first-order-f}
		\bv{g}= -\Av\cdot \dot{r} + \epsilon \bVl^{-1} \left(\left(\frac{\partial}{\partial r} \bar{h}_0\right)\dot{r} + \left(\frac{\partial}{\partial\dot{r}}\bar{h}_0\right) f(t,r,\dot{r},\bIl^{-1}\bar{h}_0) - \ad^*_{\bv{g}}(\bar{h}_0)\right) + \bo(\epsilon^2),
		\end{equation}
		for a different $\bo(\epsilon^2)$ term.
	\end{Th}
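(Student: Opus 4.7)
The plan is to carry out an asymptotic expansion in $\epsilon$ of the slow-manifold graph $p = h_\epsilon(t,r,\dot r,\epsilon)$ guaranteed by Theorem~\ref{th:pertS_NAIM_persists}, and then feed the result into the first equation of \eqref{eq:pertS-equations-motion-local-w-epsilon} to read off $\bv{g}$. Write $p = h_0 + \epsilon h_1 + \bo(\epsilon^2)$ with $h_0 \coloneqq \Il(\Am-\Av)\cdot\dot r$ as supplied by Theorem~\ref{th:pertS_NAIM_persists}; the task is to identify $h_1$, whose $1/m$-scaled version will then produce the $\bo(\epsilon)$ correction in \eqref{eq:pertS-corrected-eqn-first-order}.

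To determine $h_1$, I would substitute the ansatz into the $\dot p$ equation of \eqref{eq:pertS-equations-motion-local-w-epsilon} and match powers of $\epsilon$. Along any trajectory constrained to $M_\epsilon$, the chain rule gives $\dot p = (\partial_r h_0)\dot r + (\partial_{\dot r} h_0)\ddot r + \bo(\epsilon)$ (there is no $\partial_t h_0$ contribution since $\bIl,\Am,\Av$ are time-independent), so the left-hand side $\epsilon\bIl\bVl^{-1}\dot p$ is itself $\bo(\epsilon)$. Matching at order $\epsilon^0$ recovers $h_0$, and matching at order $\epsilon^1$, after using the linearity $\ad^*_{\xi_1}+\ad^*_{\xi_2}=\ad^*_{\xi_1+\xi_2}$ to combine the two $\ad^*$ terms in \eqref{eq:pertS-equations-motion-local-w-epsilon}, yields the algebraic identity
\begin{equation*}
h_1 = \bIl\bVl^{-1}\bigl[(\partial_r h_0)\dot r + (\partial_{\dot r} h_0)\ddot r - \ad^*_{\,\Il^{-1}h_0 - \Am\cdot\dot r}\,h_0\bigr].
\end{equation*}

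Next I would substitute $p/m = \bar h_0 + \epsilon\bar h_1 + \bo(\epsilon^2)$ into the first equation $\bv{g} = -\Am\cdot\dot r + \bIl^{-1}(p/m)$ of \eqref{eq:pertS-equations-motion-local-w-epsilon}. The $\bo(1)$ piece collapses to $-\Am\cdot\dot r + (\Am-\Av)\cdot\dot r = -\Av\cdot\dot r$ by definition of $\bar h_0$, and the $\bo(\epsilon)$ piece, after noting that the factor of $m$ causes the prefactor $\bIl\bVl^{-1}$ in $\bar h_1$ to reduce to $\bVl^{-1}$ once $\bIl^{-1}$ is applied, becomes the parenthesized bracket in \eqref{eq:pertS-corrected-eqn-first-order} but with $\ad^*$ evaluated at $\Il^{-1}h_0 - \Am\cdot\dot r$ rather than at $\bv{g}$. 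The key observation that closes the argument is that $\Il^{-1}h_0 - \Am\cdot\dot r = (\Am-\Av)\cdot\dot r - \Am\cdot\dot r = -\Av\cdot\dot r$, which is precisely the leading-order value of $\bv{g}$; since the whole correction is already multiplied by $\epsilon$, I may replace this argument by $\bv{g}$ at a cost absorbed into the $\bo(\epsilon^2)$ remainder, yielding \eqref{eq:pertS-corrected-eqn-first-order}. The alternative form \eqref{eq:pertS-corrected-eqn-first-order-f} follows analogously by substituting $\ddot r = f(t,r,\dot r,\Il^{-1}p)$ from \eqref{eq:reduced-system} with $\Il^{-1}p$ replaced by its leading value $\bIl^{-1}\bar h_0$; again the slack is absorbed into $\bo(\epsilon^2)$ because $\ddot r$ appears multiplied by $\epsilon$.

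The hard part will be bookkeeping rather than anything conceptual: tracking the factors of $m$ that distinguish $\Il$ from $\bIl$ and $h_0$ from $\bar h_0$ so that the intended cancellations produce $\bVl^{-1}$ (and not $\bIl\bVl^{-1}$) in the final formula, and refraining from identifying $\Il^{-1}h_0 - \Am\cdot\dot r$ with $\bv{g}$ until after the correction has been multiplied by $\epsilon$. On the rigor side, legitimacy of the first-order Taylor expansion $h_\epsilon = h_0 + \epsilon h_1 + \bo(\epsilon^2)$ with a remainder controlled sufficiently to differentiate $h_0$ along trajectories follows from the $C^k$-smooth ($k \geq 2$) dependence of $M_\epsilon$ on $\epsilon$ granted by Theorem~\ref{th:pertS_NAIM_persists}, together with the standard recursive construction of such expansions for Fenichel slow manifolds.
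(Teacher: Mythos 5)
Your proposal is correct and follows essentially the same strategy as the paper: expand the slow-manifold graph as an asymptotic series in $\epsilon$, match powers in the $\dot p$ equation, and substitute the result into the body-velocity equation, then replace $-\Av\cdot\dot r$ by $\bv{g}$ at the cost of an $\bo(\epsilon^2)$ remainder. The only substantive difference is a choice of working variable: the paper first passes to the locked angular velocity $\Omega=\Il^{-1}p$ (whose dynamics depend only on $\epsilon$, not on $m$ and $c$ separately, and in which the two $\ad^*$ terms are already combined into $\ad^*_{\bv g}$), which renders the factor-of-$m$ bookkeeping you flag at the end automatic; your direct expansion of $p$ is a legitimate rearrangement but relies on the regularity of $M_\epsilon$ granted by Theorem~\ref{th:pertS_NAIM_persists}, whose proof is itself carried out in the $\Omega$-variable precisely to tame that $m$-dependence.
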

	\begin{Rem}
		Notice the presence, in the second term of \eqref{eq:pertS-corrected-eqn-first-order},  of $\bar{h}_0$ rather than $h_0$ of \eqref{eq:pertS-Meps-graph-lagrangian}. This is important because the expression for $h_0$ contains an $\Il = m \bIl$ factor.
		Because of the possibility that the size of $m$ is commensurate with $\epsilon$, this means that $h_0$ could be $\bo(\epsilon)$.
		However, $\bar{h}_0$ is $\bo(1)$, ensuring that the second term is $\bo(\epsilon)$ but not $\bo(\epsilon^2)$.
	\end{Rem}
	\begin{Rem}
		Equations \eqref{eq:pertS-corrected-eqn-first-order} and \eqref{eq:pertS-corrected-eqn-first-order-f} can be viewed as adding $\bo(\epsilon)$ correction terms to the viscous connection model \eqref{eq:pertS-visc-kelly-limit}, valid in the limit $\epsilon \to 0$, to account for the more realistic situation that the inertia-damping ratio $\frac{m}{c} = \epsilon$ is small but nonzero.
	\end{Rem}
	\begin{proof}[Proof of Theorem \ref{th:pertS-dynamics-on-slow-manifold}]
		Consider the function $$\tilde{h}_\epsilon(t,r,\dot{r},\epsilon) \coloneqq \Il^{-1} h_\epsilon =  (\Am(r)-\Av(r))\cdot \dot{r} + \bo(\epsilon)$$ from the proof of Theorem \ref{th:pertS_NAIM_persists}, and define $\bar{h}_\epsilon\coloneqq \bIl \tilde{h}_\epsilon = \frac{1}{m}h_\epsilon$.
		Since $\bar{h}_\epsilon, \tilde{h}_\epsilon \in C^k$, we may expand them as asymptotic series
		\begin{equation}\label{eq:asympt-series}
		\begin{split}
		\bar{h}_\epsilon &=  \bar{h}_0 + \epsilon \bar{h}_1 + \ldots + \epsilon^{k} \bar{h}_{k} + \bo(\epsilon^{k+1})\\
		\tilde{h}_\epsilon &=  \tilde{h}_0 + \epsilon \tilde{h}_1 + \ldots + \epsilon^{k} \tilde{h}_{k} + \bo(\epsilon^{k+1}),
		\end{split}
		\end{equation}
		where for all $i$, $\bar{h}_i = \bIl \tilde{h}_i$.
		We also already know from Theorem \ref{th:pertS_NAIM_persists} that $\tilde{h}_0 =(\Am-\Av)\cdot\dot{r}$, and therefore $\tilde{h}_0(t,r,\dot{r}) \equiv \tilde{h}_0(r,\dot{r})$ has no explicit $t$-dependence.
		We now compute $\tilde{h}_1$ via a standard technique \citep{jones1995geometric}.
		Differentiating both sides of the equation $\Omega = \tilde{h}_\epsilon(t,r,\dot{r},\epsilon)$ with respect to time (using \eqref{eq:pertS-equations-motion-local-lagrangian} to differentiate the left hand side), substituting the second equation of \eqref{eq:asympt-series} for $\Omega$ in the resulting expression, and retaining terms only up to $\bo(\epsilon)$ we obtain
		\begin{align*}
		-\epsilon\bIl^{-1} \left(\frac{d}{dt}\bIl\right)\tilde{h}_0 + \bIl^{-1}\bVl(\Av- \Am)\cdot \dot{r} + \bIl^{-1}\bVl\left(\tilde{h}_0+\epsilon \tilde{h}_1 \right) + \epsilon  \bIl^{-1}\ad^*_{\bv{g}}\bIl \tilde{h}_0 = \epsilon \dot{\tilde{h}}_0 + \bo(\epsilon^2).
		\end{align*}
		Equating the coefficients of $\epsilon$ yields
		\begin{align*}
		\tilde{h}_1 &= \bVl^{-1}  \left(\frac{d}{dt}\bIl\right)\tilde{h}_0+ \bVl^{-1}\bIl \dot{\tilde{h}}_0 - \bVl^{-1}\ad^*_{\bv{g}}\bIl \tilde{h}_0\\
		&= \bVl^{-1}\frac{d}{dt}\left(\bIl \tilde{h}_0 \right) - \bVl^{-1}\ad^*_{\bv{g}}\bIl \tilde{h}_0.
		\end{align*}
		Since $h_1 = \Il \tilde{h}_1$ and $\bar{h}_0 = \bIl \tilde{h}_0$, we find
		\begin{equation}
		h_1 = \Il\bVl^{-1}\frac{d}{dt}\left(\bar{h}_0 \right) - \Il \bVl^{-1}\ad^*_{\bv{g}}\left(\bar{h}_0\right),
		\end{equation}
		and therefore (substituting $\ddot{r} = f(t,r,\dot{r},\Il^{-1} p) = f(t,r,\dot{r},\tilde{h}_0) + \bo(\epsilon)$ and differentiating $\bar{h}_0(r,\dot{r})$ via the chain rule),
		\begin{equation}\label{eq:pertS-h-eps_first-two-terms}
		\begin{split}
		h_\epsilon(t,r,\dot{r},\epsilon) &= \Il(\Am-\Av)\cdot\dot{r} \\ &  + \epsilon \Il \bVl^{-1} \left(\left(\frac{\partial}{\partial r} \bar{h}_0\right)\dot{r} + \left(\frac{\partial}{\partial\dot{r}}\bar{h}_0\right) f(t,r,\dot{r},\tilde{h}_0) - \ad^*_{\bv{g}}(\bar{h}_0)\right) + \Il \bo(\epsilon^2).
		\end{split}
		\end{equation}
		Notice that, since $\tilde{h}_0$ is a function of $r,\dot{r}$ only, the $\bo(\epsilon)$ portion of the right hand side of \eqref{eq:pertS-h-eps_first-two-terms} is a function of $t,r,\dot{r}$ alone and not $p$.
		This is required since $h_\epsilon$ is required to be a function of $t,r,\dot{r},\epsilon$ alone, and is the reason that we needed to replace $\ddot{r}$ by $f(t,r,\dot{r},\tilde{h}_0)$ in the $\bo(\epsilon)$ term. Substituting \eqref{eq:pertS-h-eps_first-two-terms} into the first equation of \eqref{eq:pertS-equations-motion-local-w-epsilon} yields Equation \eqref{eq:pertS-corrected-eqn-first-order-f}.
		Finally, making the substitution $f(t,r,\dot{r},\tilde{h}_0) = \ddot{r} + \bo(\epsilon)$ in Equation \eqref{eq:pertS-corrected-eqn-first-order-f} yields Equation \eqref{eq:pertS-corrected-eqn-first-order}.
	\end{proof}

	The following theorem makes clearer the functional form of the dynamics \eqref{eq:pertS-corrected-eqn-first-order}, and it removes the $\bv{g}$ dependence of the right hand side of \eqref{eq:pertS-corrected-eqn-first-order}.
	\begin{thmbis}{th:pertS-dynamics-on-slow-manifold-simpler}
		Assume the hypotheses of Theorem \ref{th:pertS_NAIM_persists}.
		For sufficiently small $\epsilon > 0$, then for each local trivialization there exist smooth fields of linear maps $B(r)$ and $(1,2)$ tensors $G(r)$ such that the dynamics restricted to the slow manifold $M_\epsilon$  in the local trivialization satisfy
		\begin{equation}\label{eq:pertS-dynamics-slow-mfld-solved-for-BV-simpler}
		\bv{g} = -\Av(r) \cdot \dot{r} + \epsilon B(r)\cdot\ddot{r} +\epsilon G(r)\cdot(\dot{r},\dot{r}) + \bo(\epsilon^2).
		\end{equation}
	\end{thmbis}
	\begin{Rem}
		The (1,2) tensors $G(r)$ are \emph{not} generally symmetric, which is clear from Equation \eqref{eq:G-tensors-explicit} below.
	\end{Rem}
	\begin{proof}
		Using the properties of $\ad^*$, we may write $\ad^*_{\bv{g}}(\bar{h}_0) = (C\cdot \bar{h}_0)\cdot (\bv{g})$ for an appropriate ($r$-independent) linear map $C\colon \g^* \to \text{End}(\g)$, and hence we may rewrite \eqref{eq:pertS-corrected-eqn-first-order} as
		\begin{equation*}
		(\id_\g + \epsilon \bVl^{-1} (C\cdot \bar{h}_0) )\cdot (\bv{g}) = -\Av \cdot \dot{r} +\epsilon \bVl^{-1} \left(\left(\frac{\partial}{\partial r} \bar{h}_0\right)\dot{r} + \left(\frac{\partial}{\partial\dot{r}}\bar{h}_0\right)\ddot{r}\right) + \bo(\epsilon^2).
		\end{equation*}
		For sufficiently small $\epsilon$, we may use the identity
		\begin{equation*}
		(\id_\g + \epsilon \bVl^{-1} (C\cdot \bar{h}_0) )^{-1} = \id_\g - \epsilon \bVl^{-1} (C\cdot \bar{h}_0) + \bo(\epsilon^2)
		\end{equation*}
		to obtain
		\begin{equation}\label{eq:pertS-dynamics-slow-mfld-solved-for-BV}
		\bv{g} = -\Av \cdot \dot{r} + \epsilon \bVl^{-1}(C\cdot \bar{h}_0)\cdot \Av\cdot\dot{r} + \epsilon \bVl^{-1} \left(\frac{\partial}{\partial r} \bar{h}_0\right)\dot{r} + \epsilon \bVl^{-1} \left(\frac{\partial}{\partial\dot{r}}\bar{h}_0\right)\ddot{r} + \bo(\epsilon^2).
		\end{equation}
		Since $\bar{h}_0(r,\dot{r})= \bIl(r)(\Am(r)-\Av(r))\cdot\dot{r}$ is linear in $\dot{r}$, it follows that the second and third terms are bilinear in $\dot{r}$, and the fourth term is linear in $\ddot{r}$.
		Hence we may take $B(r)\coloneqq \bVl^{-1} \left(\frac{\partial}{\partial\dot{r}}\bar{h}_0\right)$ and
		\begin{equation}\label{eq:G-tensors-explicit}
		G(r)\cdot(\dot{r},\dot{r})\coloneqq \bVl^{-1}(C\cdot \Il(\Am-\Av)\cdot\dot{r})\cdot \Av\cdot\dot{r} + \epsilon \bVl^{-1} \frac{\partial}{\partial r}\left( \Il(\Am-\Av)\cdot\dot{r}\right)\cdot \dot{r}.
		\end{equation}
	\end{proof}

\bibliographystyle{plainnat}
\bibliography{ref-final}

\begin{thebibliography}{38}
\providecommand{\natexlab}[1]{#1}
\providecommand{\url}[1]{\texttt{#1}}
\expandafter\ifx\csname urlstyle\endcsname\relax
  \providecommand{\doi}[1]{doi: #1}\else
  \providecommand{\doi}{doi: \begingroup \urlstyle{rm}\Url}\fi

\bibitem[Bittner et~al.(2018)Bittner, Hatton, and Revzen]{bittner2018geom}
B~Bittner, R~L Hatton, and S~Revzen.
\newblock Geometrically optimal gaits: a data-driven approach.
\newblock \emph{Nonlinear Dynamics}, pages 1--16, 2018.

\bibitem[Bloch(2015)]{bloch2015nonholonomic}
A~M Bloch.
\newblock \emph{Nonholonomic mechanics and control}, volume~24.
\newblock Springer-Verlag, 2 edition, 2015.
\newblock ISBN 978-1-4939-3016-6.
\newblock \doi{10.1007/978-1-4939-3017-3}.

\bibitem[Bloch et~al.(1996)Bloch, Krishnaprasad, Marsden, and
  Murray]{bloch1996nonholonomic}
A~M Bloch, P~S Krishnaprasad, J~E Marsden, and R~M Murray.
\newblock Nonholonomic mechanical systems with symmetry.
\newblock \emph{Archive for Rational Mechanics and Analysis}, 136\penalty0
  (1):\penalty0 21--99, 1996.

\bibitem[Brendelev(1981)]{brendelev1981realization}
V~N Brendelev.
\newblock On the realization of constraints in nonholonomic mechanics.
\newblock \emph{Journal of Applied Mathematics and Mechanics}, 45\penalty0
  (3):\penalty0 351--355, 1981.

\bibitem[Cox(1970)]{cox1970motion}
R~G Cox.
\newblock The motion of long slender bodies in a viscous fluid part 1. general
  theory.
\newblock \emph{Journal of Fluid mechanics}, 44\penalty0 (4):\penalty0
  791--810, 1970.

\bibitem[do~Carmo(1992)]{docarmo1992riemannian}
M~P do~Carmo.
\newblock \emph{Riemannian geometry}.
\newblock Birkh{\"a}user, 2 edition, 1992.
\newblock ISBN 978-0-8176-3490-2.

\bibitem[Eldering(2013)]{eldering2013normally}
J~Eldering.
\newblock \emph{Normally hyperbolic invariant manifolds: the noncompact case}.
\newblock Atlantis Press, 2013.
\newblock ISBN 978-94-6239-002-7.
\newblock \doi{10.2991/978-94-6239-003-4}.

\bibitem[Eldering(2016)]{eldering2016realizing}
J~Eldering.
\newblock Realizing nonholonomic dynamics as limit of friction forces.
\newblock \emph{Regular and Chaotic Dynamics}, 21\penalty0 (4):\penalty0
  390--409, 2016.

\bibitem[Eldering and Jacobs(2016)]{eldering2016role}
J~Eldering and H~O Jacobs.
\newblock The role of symmetry and dissipation in biolocomotion.
\newblock \emph{SIAM Journal on Applied Dynamical Systems}, 15\penalty0
  (1):\penalty0 24--59, 2016.

\bibitem[Eldering et~al.(2018)Eldering, Kvalheim, and
  Revzen]{kvalheim2018global}
J~Eldering, M~Kvalheim, and S~Revzen.
\newblock Global linearization and fiber bundle structure of invariant
  manifolds.
\newblock \emph{Nonlinearity}, 31\penalty0 (9):\penalty0 4202--4245, 2018.

\bibitem[Fenichel(1971)]{fenichel1971persistence}
N~Fenichel.
\newblock Persistence and smoothness of invariant manifolds for flows.
\newblock \emph{Indiana Univ. Math. J.}, 21:\penalty0 193--226, 1971.
\newblock ISSN 0022-2518.
\newblock \doi{10.1512/iumj.1971.21.21017}.
\newblock URL \url{http://dx.doi.org/10.1512/iumj.1971.21.21017}.

\bibitem[Fenichel(1974)]{fenichel1974asymptotic}
N~Fenichel.
\newblock Asymptotic stability with rate conditions.
\newblock \emph{Indiana University Mathematics Journal}, 23\penalty0
  (12):\penalty0 1109--1137, 1974.

\bibitem[Fenichel(1977)]{fenichel1977asymptotic}
N~Fenichel.
\newblock Asymptotic stability with rate conditions, ii.
\newblock \emph{Indiana University Mathematics Journal}, 26\penalty0
  (1):\penalty0 81--93, 1977.

\bibitem[Fenichel(1979)]{fenichel1979geometric}
N~Fenichel.
\newblock Geometric singular perturbation theory for ordinary differential
  equations.
\newblock \emph{Journal of Differential Equations}, 31\penalty0 (1):\penalty0
  53--98, 1979.

\bibitem[Guckenheimer(1975)]{isochrons}
J~M Guckenheimer.
\newblock Isochrons and phaseless sets.
\newblock \emph{Journal of Mathematical Biology}, 1:\penalty0 259--273, 1975.
\newblock \doi{10.1007/BF01273747}.

\bibitem[Hatton and Choset(2011)]{hatton2011geometric}
R~L Hatton and H~Choset.
\newblock Geometric motion planning: The local connection, stokes theorem, and
  the importance of coordinate choice.
\newblock \emph{The International Journal of Robotics Research}, 30\penalty0
  (8):\penalty0 988--1014, 2011.

\bibitem[Hatton and Choset(2013)]{hatton2013geometric}
R~L Hatton and H~Choset.
\newblock Geometric swimming at low and high reynolds numbers.
\newblock \emph{IEEE Transactions on Robotics}, 29\penalty0 (3):\penalty0
  615--624, 2013.

\bibitem[Hirsch(1994)]{hirsch1976differential}
M~W Hirsch.
\newblock \emph{Differential topology}, volume~33 of \emph{Graduate Texts in
  Mathematics}.
\newblock Springer-Verlag, New York, 1994.
\newblock ISBN 0-387-90148-5.
\newblock Corrected reprint of the 1976 original.

\bibitem[Hirsch et~al.(1977)Hirsch, Pugh, and Shub]{hirsch1977}
M~W Hirsch, C~C Pugh, and M~Shub.
\newblock \emph{Invariant manifolds}.
\newblock Lecture Notes in Mathematics, Vol. 583. Springer-Verlag, Berlin-New
  York, 1977.

\bibitem[Jones(1995)]{jones1995geometric}
C~K R~T Jones.
\newblock Geometric singular perturbation theory.
\newblock In \emph{Dynamical systems}, pages 44--118. Springer, 1995.

\bibitem[Josi\'c(2000)]{josic2000synchronization}
K~Josi\'c.
\newblock Synchronization of chaotic systems and invariant manifolds.
\newblock \emph{Nonlinearity}, 13\penalty0 (4):\penalty0 1321--1336, 2000.

\bibitem[Kaper(1999)]{kaper1999systems}
T~J Kaper.
\newblock An introduction to geometric methods and dynamical systems theory for
  singular perturbation problems.
\newblock \emph{Analyzing Multiscale Phenomena Using Singular Perturbation
  Methods: American Mathematical Society Short Course, January 5-6, 1998,
  Baltimore, Maryland}, 56\penalty0 (56):\penalty0 85--131, 1999.

\bibitem[Karapetian(1981)]{karapetian1981realizing}
A~V Karapetian.
\newblock On realizing nonholonomic constraints by viscous friction forces and
  celtic stones stability.
\newblock \emph{Journal of Applied Mathematics and Mechanics}, 45\penalty0
  (1):\penalty0 30--36, 1981.

\bibitem[Kelly and Murray(1995)]{kelly1995geometric}
S~D Kelly and R~M Murray.
\newblock Geometric phases and robotic locomotion.
\newblock \emph{Journal of Field Robotics}, 12\penalty0 (6):\penalty0 417--431,
  1995.

\bibitem[Kelly and Murray(1996)]{kelly1996geometry}
S~D Kelly and R~M Murray.
\newblock The geometry and control of dissipative systems.
\newblock In \emph{Decision and Control, 1996., Proceedings of the 35th IEEE
  Conference on}, volume~1, pages 981--986. IEEE, 1996.

\bibitem[Kobayashi and Nomizu(1963)]{kobayashi1963foundationsV1}
S~Kobayashi and K~Nomizu.
\newblock \emph{Foundations of differential geometry. {V}ol {I}}.
\newblock Interscience Publishers, a division of John Wiley \& Sons, New
  York-London, 1963.
\newblock ISBN 978-0-471-15733-5.

\bibitem[Lee(2013)]{lee2013smooth}
J~M Lee.
\newblock \emph{Introduction to Smooth Manifolds}.
\newblock Springer-Verlag, 2 edition, 2013.
\newblock ISBN 0072-5285.
\newblock \doi{10.1007/978-1-4419-9982-5}.

\bibitem[Marsden(2009)]{marsden2009lectures}
J~E Marsden.
\newblock \emph{Lectures on mechanics}, volume 174.
\newblock Cambridge University Press, 2009.

\bibitem[Marsden and Ratiu(1994)]{marsden1994introduction}
J~E Marsden and T~S Ratiu.
\newblock \emph{Introduction to mechanics and symmetry}.
\newblock Springer-Verlag, 1 edition, 1994.
\newblock ISBN 0-387-97275-7.

\bibitem[Marsden et~al.(1991)Marsden, O'Reilly, Wicklin, and
  Zombros]{marsden1991symmetry}
J~E Marsden, OM~O'Reilly, F~J Wicklin, and B~W Zombros.
\newblock Symmetry, stability, geometric phases, and mechanical integrators
  (part i).
\newblock \emph{Nonlinear Science Today}, 1\penalty0 (1):\penalty0 4--11, 1991.

\bibitem[Milnor(1969)]{milnor1969morse}
J~Milnor.
\newblock \emph{Morse Theory}, volume~51 of \emph{Annals of Mathematics
  Studies}.
\newblock Princeton University Press, 1969.
\newblock ISBN 0-691-08008-9.

\bibitem[Nestruev(2003)]{nestruev2003smooth}
J~Nestruev.
\newblock \emph{Smooth Manifolds and Observables}.
\newblock Springer-Verlag, 1 edition, 2003.
\newblock ISBN 0-387-95543-7.

\bibitem[Purcell(1977)]{purcell1977life}
E~M Purcell.
\newblock Life at low reynolds number.
\newblock \emph{American Journal of Physics}, 45\penalty0 (1):\penalty0 3--11,
  1977.

\bibitem[Revzen(2009)]{RevzenPhD09}
S~Revzen.
\newblock \emph{Neuromechanical Control Architectures in Arthropod Locomotion}.
\newblock PhD thesis, Univeristy of California, Berkeley, Dec 2009.
\newblock Department of Integrative Biology.

\bibitem[Revzen and Guckenheimer(2008)]{RevGuk08}
S~Revzen and J~M Guckenheimer.
\newblock Estimating the phase of synchronized oscillators.
\newblock \emph{Phys Rev E}, 78\penalty0 (5):\penalty0 051907, 2008.
\newblock \doi{10.1103/PhysRevE.78.051907}.

\bibitem[Revzen and Kvalheim(2015)]{revzen2015_SPIE}
S~Revzen and M~Kvalheim.
\newblock Data driven models of legged locomotion.
\newblock In \emph{SPIE Defense+ Security}, volume 9467, pages 94671V--1 --
  94671V--8. International Society for Optics and Photonics, 2015.
\newblock \doi{10.1117/12.2178007}.

\bibitem[Spong et~al.(1987)Spong, Khorasani, and Kokotovic]{spong1987integral}
M~Spong, K~Khorasani, and P~Kokotovic.
\newblock An integral manifold approach to the feedback control of flexible
  joint robots.
\newblock \emph{IEEE Journal on Robotics and Automation}, 3\penalty0
  (4):\penalty0 291--300, 1987.

\bibitem[van Strien(1979)]{van1979center}
S~J van Strien.
\newblock Center manifolds are not {$C^\infty$}.
\newblock \emph{Mathematische Zeitschrift}, 166\penalty0 (2):\penalty0
  143--145, 1979.

\end{thebibliography}

\end{document}